\newtheorem*{rep@theorem}{\rep@title}
\newcommand{\newreptheorem}[2]{%
\newenvironment{rep#1}[1]{%
 \def\rep@title{#2 \ref{##1}}%
 \begin{rep@theorem}}%
 {\end{rep@theorem}}}
\theoremstyle{plain}
\newtheorem{theorem}{Theorem}[section]
\newtheorem{proposition}[theorem]{Proposition}
\newtheorem{lemma}[theorem]{Lemma}
\newtheorem{corollary}[theorem]{Corollary}
\theoremstyle{definition}
\newtheorem{definition}[theorem]{Definition}
\theoremstyle{remark}
\newtheorem{remark}[theorem]{Remark}
\icmltitlerunning{A Topological characterisation of Weisfeiler-Leman equivalence classes}
\begin{document}

\twocolumn[
\icmltitle{A Topological characterisation of Weisfeiler-Leman equivalence classes}

%GraphCovers dataset: \\ Universal Covers, Weisfeiler-Leman, and Graph Neural Networks

% It is OKAY to include author information, even for blind
% submissions: the style file will automatically remove it for you
% unless you've provided the [accepted] option to the icml2022
% package.

% List of affiliations: The first argument should be a (short)
% identifier you will use later to specify author affiliations
% Academic affiliations should list Department, University, City, Region, Country
% Industry affiliations should list Company, City, Region, Country

% You can specify symbols, otherwise they are numbered in order.
% Ideally, you should not use this facility. Affiliations will be numbered
% in order of appearance and this is the preferred way.
\icmlsetsymbol{equal}{*}

\begin{icmlauthorlist}
\icmlauthor{Jacob Bamberger}{yyy}

%\icmlauthor{}{sch}
%\icmlauthor{}{sch}
\end{icmlauthorlist}

\icmlaffiliation{yyy}{Signal Processing Laboratory 2, EPFL, Lausanne, Switzerland}
% \icmlaffiliation{comp}{Company Name, Location, Country}
% \icmlaffiliation{sch}{School of ZZZ, Institute of WWW, Location, Country}

\icmlcorrespondingauthor{Jacob Bamberger}{jacob.bamberger@epfl.ch}

% You may provide any keywords that you
% find helpful for describing your paper; these are used to populate
% the "keywords" metadata in the PDF but will not be shown in the document
\icmlkeywords{Graph Neural Network, Geometric Deep Learning, Weisfeiler-Leman, TAG in ML}

\vskip 0.3in
]

% this must go after the closing bracket ] following \twocolumn[ ...

% This command actually creates the footnote in the first column
% listing the affiliations and the copyright notice.
% The command takes one argument, which is text to display at the start of the footnote.
% The \icmlEqualContribution command is standard text for equal contribution.
% Remove it (just {}) if you do not need this facility.

\printAffiliationsAndNotice{}  % leave blank if no need to mention equal contribution
% \printAffiliationsAndNotice{\icmlEqualContribution} % otherwise use the standard text.

\begin{abstract}
Graph Neural Networks (GNNs) are learning models aimed at processing graphs and signals on graphs. The most popular and successful GNNs are based on message passing schemes. Such schemes inherently have limited expressive power when it comes to distinguishing two non-isomorphic graphs. In this article, we rely on the theory of covering spaces to fully characterize the classes of graphs that GNNs cannot distinguish. We then generate arbitrarily many non-isomorphic graphs that cannot be distinguished by GNNs, leading to the GraphCovers dataset. We also show that the number of indistinguishable graphs in our dataset grows super-exponentially with the number of nodes. Finally, we test the GraphCovers dataset on several GNN architectures, showing that none of them can distinguish any two graphs it contains.
% In this article, we generate arbitrarily many non-isomorphic graphs that cannot be distinguished by GNNs, leading to the GraphCovers dataset. We then show that the number of generated graphs scales super-exponentially with the number of nodes in each graph. To do so, we leverage Leighton's Theorem, which gives a topological characterisation of graphs that cannot be distinguished by GNNs. We start by giving a short proof to the relevant part of the Theorem.
\end{abstract}

\section{Introduction}
\label{sec:introduction} Many modern Graph Neural Networks (GNNs) use a message passing scheme \cite{pmlr-v70-gilmer17a}.
%, 1555942}.
Recent studies compared such message passing GNNs to the Weisfeiler-Leman (WL) graph isomorphism test \yrcite{weisfeilerleman}, and used this analogy to show that such GNNs have limited expressive power \cite{Morris_Ritzert_Fey_Hamilton_Lenssen_Rattan_Grohe_2019, GIN}.
% These networks therefore ressemble the Weisfeiler-Leman (WL) graph isomorphism test and corresponding color refinement scheme \cite{weisfeilerleman}. 
% Recent studies have used this analogy to show that message passing GNNs have limited expressive power when it comes to distinguishing non-isomorphic graphs \cite{},
The challenge of finding architectures that overcome these limits is a very active research question, and has inspired much work, e.g. \cite{Morris_Ritzert_Fey_Hamilton_Lenssen_Rattan_Grohe_2019, 10.5555/3524938.3525258, papp2021dropgnn, bevilacqua2022equivariant}.

In this paper, we build upon the WL analogy to fully characterize graphs that GNNs cannot distinguish. We do so by extending Leighton's Theorem \yrcite{LEIGHTON1982231} to give a topological description of graphs that cannot be distinguished by the WL test. The description uses covering maps, a notion of local isomorphism between spaces.
%and since GNNs process information locally, it is not suprising that covering maps fool GNNs.
Our proofs being mainly constructive, we use them to generate a family of graphs that are non-isomorphic but indistinguishable by WL, leading to the GraphCovers dataset. We then show that the number of generated isomorphism classes scales super-exponentially with the desired number of nodes. Finally, we benchmark the GraphCovers dataset on the graph classification task with several common GNN architectures, showing that none of them can distinguish any two graphs it contains.
% In this paper, we build upon the WL analogy to provide a dataset on which GNN architectures provably and empirically perform poorly. To this end, we leverage part of Leighton's Theorem \yrcite{LEIGHTON1982231} which gives a topological description of connected graphs that cannot be distinguished using the WL test. We generalize this description to disconnected graphs, giving a novel structural classification of WL equivalence classes. We first give a self contained proof of the relevant part of Leighton's Theorem.
% % In section~\ref{sec:proofofmain} we give a self contained proof of the results needed to prove impossibility of our dataset.
% We then use it to generate a family of graphs that are non-isomorphic but indistinguishable by WL, and show that the number of generated isomorphism classes scales super-exponentially with the desired number of nodes. Finally, we test the GraphCovers dataset on the graph classification task with several common GNN architectures showing that none of them can distinguish any two graphs it contains.

Before introducing the terms and concepts, we state the main ``battle horse'' of this paper, which is a rephrasing of Leighton's Theorem. See Figure~\ref{fig:coveringexample} for an illustration.

\begin{theorem}\label{thm:main} \cite{LEIGHTON1982231}
Two connected graphs $G$ and $G'$ are indistinguishable by the WL test if and only if they have the same order and have isomorphic universal covers, or equivalently, if and only if they have the same order and have isomorphic finite covers.
\end{theorem}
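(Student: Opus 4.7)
The plan is to prove two implications which, together with Leighton's original theorem (invoked as a black box for the equivalence between isomorphic universal covers and the existence of a common finite cover), close the loop of equivalences: namely, WL-indistinguishability implies isomorphic universal covers, and existence of a common finite cover under the same-order hypothesis implies WL-indistinguishability.

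The first implication rests on the folklore identification between the stable WL coloring and the universal cover: the stable WL color of a vertex $v \in V(G)$ is a complete isomorphism invariant of the rooted tree $(\tilde{G}, v)$, because the depth-$k$ WL color captures the $k$-neighbourhood of $v$ in $\tilde{G}$, and stabilisation (which occurs in at most $|V(G)|$ rounds) recovers the entire rooted universal cover up to isomorphism. I would record this as a preliminary lemma. If $G$ and $G'$ are WL-indistinguishable then their multisets of stable colors agree, so any color appearing in both lifts to vertices $v \in G$ and $v' \in G'$ giving $(\tilde{G}, v) \cong (\tilde{G}', v')$, and forgetting roots yields $\tilde{G} \cong \tilde{G}'$.

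For the converse direction I would use the observation that any covering map $p\colon H \to G$ preserves stable WL colors, since the rooted universal cover at $h \in H$ is canonically identified with the rooted universal cover at $p(h) \in G$ by transitivity of covers. Assuming $H$ covers both $G$ and $G'$ via $p$ and $p'$, and $|V(G)| = |V(G')| = n$, both fibres have the common size $k = |V(H)|/n$, so counting occurrences of each color $c$ in $H$ through either projection gives $k\cdot|\{v \in G : \mathrm{col}(v)=c\}| = k\cdot|\{v' \in G' : \mathrm{col}(v')=c\}|$, and the WL multisets on $G$ and $G'$ coincide. The main obstacle, beyond citing Leighton accurately, is threading the same-order hypothesis carefully: without it a common universal cover does not force matching color multiplicities (for instance, a graph and any proper finite cover of itself share a universal cover but have different vertex counts and unequal multisets), so this assumption has to propagate into equal fibre sizes in both directions of the argument.
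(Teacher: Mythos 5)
Your proposal is correct in outline and closes the loop of equivalences by a genuinely different route than the paper. For the forward direction, the paper lifts the stable coloring along the covering maps (Lemma~\ref{lem:bigcovering}), so that the two universal covers inherit WL-indistinguishability, and then applies an unrooted tree-isomorphism lemma (Lemma~\ref{lem:trees}, proved in the appendix by a BFS color-matching induction); you instead encode each stable color as the isomorphism type of a \emph{rooted} universal cover and transfer a single shared color into an isomorphism of the trees. These are close cousins: your ``folklore'' lemma (stable colors classify rooted universal covers to \emph{all} depths, essentially Norris's theorem) is exactly where the work lies, and proving it requires the same inductive color-matching construction as the paper's proof of Lemma~\ref{lem:trees}; you state it but do not prove it, so your writeup is complete only modulo that lemma. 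For the converse the difference is more substantive: the paper applies Corollary~\ref{cor:combine} directly to the universal covers, dividing color counts in the cover by the covering degree --- a step that is literally meaningful only for finite covers, since universal covers of graphs with cycles are infinite --- whereas you invoke Leighton's theorem to replace the isomorphic universal covers by a common \emph{finite} connected cover, so that the fibre size $k=|\mathcal{V}(H)|/n$ is finite and the cancellation in $k\cdot|\{v\in G:\mathrm{col}(v)=c\}|=k\cdot|\{v'\in G':\mathrm{col}(v')=c\}|$ is legitimate. Your detour is thus more careful at precisely the point where the paper's self-contained argument is loosest, at the price of leaning on Leighton, which the paper cites only for the ``equivalently, finite covers'' clause and deliberately keeps out of the portion it proves. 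Both treatments take Leighton as a black box, and your correctly flagged concern about threading the same-order hypothesis (equal orders force equal, finite, fibre sizes on both sides) is the right one; with that and the rooted-cover lemma supplied, your argument goes through.
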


\begin{figure}[ht]
    \vskip 0.2in
    \begin{center}
    \centerline{\includegraphics[width=\columnwidth]{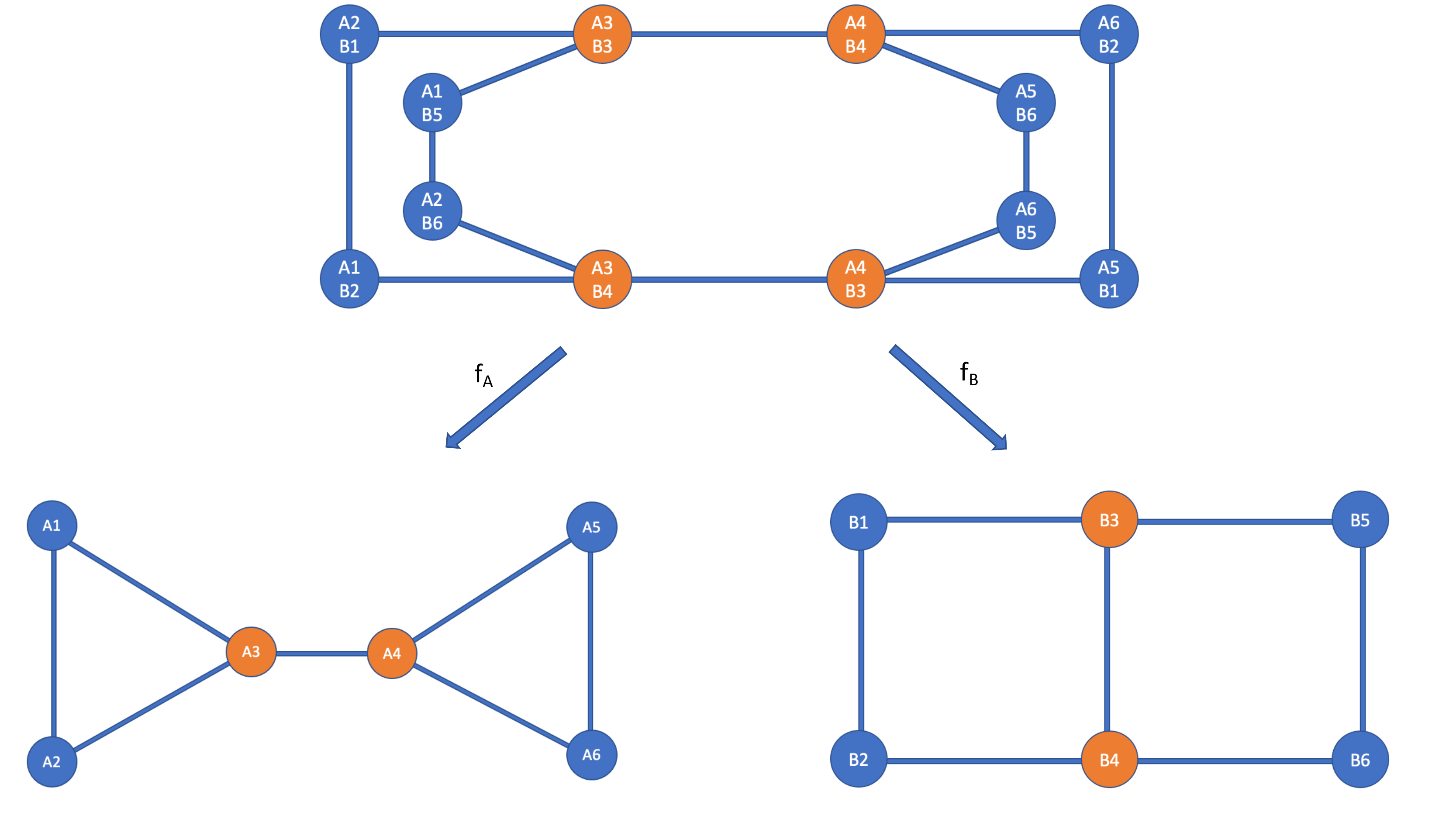}}
    \caption{Two graphs $G_A$ (left) and $G_B$ (right) and their stable WL colourings which are indistinguishable, and a common covering space (top), with maps $f_A$ and $f_B$ defined by the vertex labels.}
    \label{fig:coveringexample}
    \end{center}
    \vskip -0.2in
\end{figure}

We give a self contained proof of a simpler version of Theorem~\ref{thm:main}, namely that $G$ and $G'$ are indistinguishable by WL if and only if they have the same order and have isomorphic universal covers. The full theorem is not proven for two reasons: first, the partial result will be sufficiently strong for us to generate the GraphCovers dataset; second, we expect our proofs to generalize to simplicial and cellular message passing as defined in \cite{NEURIPS2021_157792e4, bodnar2021weisfeiler}, whereas it is known that Leighton's Theorem does not generalize to arbitrary simplicial complexes, e.g. \cite{10.5555/924333}.

% Dani's PhD thesis
% This result will be sufficiently strong for us to generate our dataset.
%To descend to finite coverings, we invoke Leighton's Theorem.

To fully characterize WL equivalence classes, we need to also consider disconnected graphs, since for example the WL test is fooled by the length $6$ cycle and the disjoint union of two length $3$ cycles. Hence we generalize Theorem~\ref{thm:main} to get the following classification result for WL equivalence classes. The proof can be found in the Appendix~\ref{app:maincor}.
\begin{theorem}\label{cor:disconnected}
    Two graphs $G$ and $G'$ of bounded degree are indistinguishable by the WL test if and only if they admit decompositions into disjoint unions of connected graphs:
    \[ G = \bigoplus_{i\in \mathcal{I}} \bigoplus_{j\in\mathcal{J}_i} G_{ij} \text{ and } G' = \bigoplus_{i\in\mathcal{I}}\bigoplus_{j'\in\mathcal{J}_{i}'} G'_{ij'} \ ,\] such that the following holds
    \begin{itemize}
    \vspace{-0.15in}
        \item for every $i\in\mathcal{I}$,  $|\bigoplus_{j\in\mathcal{J}_i} G_{ij}| =  |\bigoplus_{j'\in\mathcal{J}'_{i}} G'_{ij'}|$
        \vspace{-0.075in}
        \item for every $i\in\mathcal{I}$, there is a graph $H_i$ that covers both $G_{ij}$ and $G'_{ij'}$ for all $j\in\mathcal{J}_i$ and $j'\in\mathcal{J}'_i$.
        % for every $j\in\mathcal{J}_i$ and $j'\in\mathcal{J}_i'$ the components $G_{ij}$ and $G_{ij'}$ admit a common connected cover, and if $i_1\neq i_2$ then $G_{ij}$ and $G_{ij'}$ do admit a common cover.
    \end{itemize}
    
    Moreover, if $|\bigoplus_{j\in\mathcal{J}_i} G_{ij}|<\infty$, the cover $H_i$ can be chosen to be finite.
\end{theorem}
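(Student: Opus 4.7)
The plan is to reduce the statement to the connected case (Theorem~\ref{thm:main}) by decomposing both graphs into their connected components and grouping these by the isomorphism class of their universal cover. The key observation driving the reduction is that the stable WL color of a vertex $v$ depends only on the rooted universal cover of its connected component at $v$; consequently two connected graphs share WL color labels if and only if they share a universal cover, and when they do, the \emph{relative} frequency of each color is the same in both.

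For the forward direction, I would let $U_i$ denote the universal cover of $H_i$. Since $H_i$ covers every $G_{ij}$ and every $G'_{ij'}$, composing covering maps shows that $U_i$ is the common universal cover of all components in class $i$. Hence the multiset of stable WL colors contributed by $\bigoplus_{j\in\mathcal{J}_i}G_{ij}$ depends only on $U_i$ and on the total order $\bigl|\bigoplus_{j\in\mathcal{J}_i}G_{ij}\bigr|$, which equals $\bigl|\bigoplus_{j'\in\mathcal{J}'_i}G'_{ij'}\bigr|$ by hypothesis. Summing over $\mathcal{I}$, and using that colors arising from different universal-cover classes are disjoint, yields WL-indistinguishability of $G$ and $G'$.

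For the reverse direction, given WL-equivalent $G$ and $G'$, I would partition the connected components of each into classes $\mathcal{J}_i, \mathcal{J}'_i$ indexed by the universal cover of the component. Disjointness of WL color sets across classes forces every class present in $G$ to also appear in $G'$. Within a class, the relative frequency of each color is determined by $U_i$, so the matching color counts between $G$ and $G'$ force the per-class total orders to agree. Finally, to produce $H_i$, I would take $H_i=U_i$ when the total order is infinite, and in the finite case iterate Leighton's theorem (applying it pairwise to the finitely many components in class $i$) to build a common finite cover.

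The main obstacle is the auxiliary claim that any two connected graphs with the same universal cover display each stable WL color at the same relative frequency. I would establish it by passing to a common finite cover $K$ via Theorem~\ref{thm:main}, observing that a $k$-sheeted covering $K\to G$ preserves the stable WL coloring and multiplies each per-color count by $k$; comparing the two covering maps from $K$ then gives equal relative frequencies. Once this lemma is in hand, the per-class order-matching in the reverse direction is immediate, and the remainder of the argument is bookkeeping on disjoint unions.
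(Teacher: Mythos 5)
Your overall strategy is the same as the paper's: split both graphs into connected components, group the components into classes (you index classes by universal cover, the paper indexes by stable coloring --- equivalent groupings, via Lemma~\ref{lem:bigcovering} and Lemma~\ref{lem:trees}), use disjointness of colors across classes to match classes and per-class orders, and compare color counts within a class through a common cover, reducing everything to Theorem~\ref{thm:main}. Your treatment of the ``Moreover'' clause --- iterating Leighton's theorem pairwise over the finitely many components of a class, using that covers compose so the partial common cover still has universal cover $U_i$ --- is actually more explicit than anything in the paper's proof, and that step is sound precisely because there you invoke Leighton's theorem in its original, order-free form.

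There is, however, a genuine flaw in your proof of the auxiliary relative-frequency lemma, which you yourself identify as the crux. You propose to obtain a common finite cover $K$ of two connected graphs with the same universal cover ``via Theorem~\ref{thm:main}''. But every clause of Theorem~\ref{thm:main} includes equality of orders: the theorem says nothing about two connected graphs of different orders, and the graphs you must compare here (arbitrary members of one class, e.g. $C_3$ and $C_6$, which share the bi-infinite path as universal cover) generically have different orders and are not WL-equivalent, so Theorem~\ref{thm:main} cannot produce $K$. To close this you need either Leighton's theorem in its unrestricted form (any two finite connected graphs with isomorphic universal covers admit a common finite cover) --- which the paper cites but deliberately does not prove --- or an argument avoiding finite covers altogether: the stable coloring is an equitable partition whose colored degree matrix is determined by the universal cover, and the edge-counting relations $|C_a|\,m_{ab} = |C_b|\,m_{ba}$ together with connectivity of the quotient on color classes determine the relative frequencies. (The paper's own converse sidesteps the lemma by counting colors directly through the hypothesized cover $H_i$, implicitly treating it as finite.) A smaller gap of the same nature: your ``key observation'' justifies only that the rooted universal cover determines the stable color; the reverse implication --- that equal stable colors force isomorphic universal covers --- is what your disjointness-across-classes argument needs, and it is exactly the content of the paper's Lemma~\ref{lem:disjoint} combined with Lemma~\ref{lem:trees}, not a consequence of the forward implication.
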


\subsection{Related work}
\textbf{Coverings: } Covering maps for graphs are a special case of covering maps for topological spaces, see Chapter $1$ in \cite{Hatcher:478079} for a classical introduction to coverings, and Chapter~$1$.A for the case of graphs. Covering graphs have been fruitful in many ways, both in applications and in pure mathematics: they were first applied to problems in multi-processor networks \cite{angluin}, since then they have been at the core of the field of geometric and combinatorial group theory, e.g. \cite{stallings}. They have also been studied from a purely graph theoretic perspective, e.g. \cite{GROSS1977273}, from a probabilistic viewpoint \cite{randomcovers2002}, and from a spectral one \cite{ChungFan1998CHKa}. However, we are not aware of applications of coverings to GNNs.\\
% For an introduction to covering maps of general topological spaces and their beautiful connection to group theory, we refer the reader to \cite{Hatcher:478079}, Chapter $1$. For a concise introduction to to covers of graphs we refer the reader to \cite{stallings}.
% which is the main result that we will need for our purpose. To descend to finite coverings, we invoke Leighton's Theorem.
\textbf{GNN expressiveness:} A successful research topic has been to understand how good GNNs are at distinguishing non-isomorphic graphs. To do so, GNNs have been compared to the WL test, an algorithm capable of distinguishing most, but not all, non-isomorphic graphs \cite{weisfeilerleman}. Using the WL analogy, researchers have shown that GNNs have limited expressive power when it comes to distinguishing non-isomorphic graphs \cite{Morris_Ritzert_Fey_Hamilton_Lenssen_Rattan_Grohe_2019, GIN}. These results have triggered a body of work that study the combinatorial structures GNNs cannot distinguish, e.g. \cite{Loukas2020What, NEURIPS2020_23685a24, NEURIPS2020_75877cb7, 10.5555/3524938.3525258}. A line of work showed that GNN architectures gain in expressive power by encoding node ids either uniquely or randomly \cite{Loukas2020What, ijcai2021-291, doi:10.1137/1.9781611976700.38}, or through the use of port numbers - an idea adapted from the theory of distributed local algorithms \cite{NEURIPS2019_635440af}. Another approach to gaining expressive power is to extend the WL analogy to the higher order $k$-WL tests \cite{immerman1990describing}. This idea lead to several higher order architectures which are provably more expressive \cite{Morris_Ritzert_Fey_Hamilton_Lenssen_Rattan_Grohe_2019, NEURIPS2019_bb04af0f}. Another approach is to encode substructures by treating graphs as a (possibly random) set of subgraphs \cite{papp2021dropgnn, bevilacqua2022equivariant}, or by treating them as simplicial or cellular complexes \cite{NEURIPS2021_157792e4, bodnar2021weisfeiler}.

\section{Proof of the main Theorem}
\label{sec:proofofmain}
\subsection{Definitions}
\label{subsec:definitions}

We start by defining graphs, (iso)morphisms and coverings between graphs, and (iso)morphisms between coverings.
\begin{definition}\label{def:graphs}
An undirected graph $G$ consists of a countable vertex set $\mathcal{V}(G)$ and an edge set $\mathcal{E}(G)\subseteq \mathcal{V}(G)\times \mathcal{V}(G)$ that is symmetric (i.e. $vw\in \mathcal{E}(G) \implies wv\in \mathcal{E}(G)$).
%and anti-relfexive (i.e. $\forall v\in \mathcal{V}(G), \ vv\notin \mathcal{E}(G)$).
When the underlying graph is clear from context, we refer to them as $\mathcal{V}$ and $\mathcal{E}$.
The neighbors of a vertex $v$ in a graph $G$ is defined as $\mathcal{N}_G(v):=\{w\in \mathcal{V}(G)\ |\ vw\in \mathcal{E}(G)\}$. For each vertex $v$, we define it's neighborhood as the subgraph $G_v \subseteq G$ with vertices $\mathcal{V}(G_v) := \{v\}\bigcup \mathcal{N}_G(v)$ and edges being the edges $uv$ and $vu$ connecting $v$ to a neighbor $u\in \mathcal{N}_{v}$. The order of $G$ is $|\mathcal{V}(G)|$.\end{definition}

\begin{definition}
A morphism $\phi: H \rightarrow G$ between two graphs $H$,  $G$ is a map (denoted by the same symbol) $\phi: \mathcal{V}(H) \rightarrow \mathcal{V}(G)$ that preserves edges, i.e. $vw\in \mathcal{E}(H) \implies \phi(v)\phi(w)\in \mathcal{E}(G)$.
% if for every $v\in V_G$, and every neighbor $w\in N_G(v)$, then $vw$ is mapped to only once.
It is an isomorphism if $\phi$ is bijective on vertices and if the induced map on edges is also bijective, i.e. $vw\in \mathcal{E}(H) \iff \phi(v)\phi(w)\in \mathcal{E}(G)$.
It is a covering if it is surjective on vertices, and if $\phi$ restricts to an isomorphism on all neighbor graphs $G_v$. That is, if $\forall v \in \mathcal{V}(G), \ \forall v' \in \phi^{-1}(v)$, and $\forall w'\in N_G(v'), \ \exists! v'w'\in \mathcal{E}(H)$ such that $\phi(v)\phi(w) = v'w'$. If $\phi$ is a covering map, we refer to the co-domain as the base graph.
\end{definition}

\begin{remark}
Note that isomorphisms are a special case of coverings. See Figure~\ref{fig:coveringexample} for examples of covering maps. 
\end{remark}

%Covers are also related by a notion of (iso)morphism:
\begin{definition}\label{def:coveringmorphism}
A morphism between covering spaces $\pi: H\rightarrow G$ and $\pi': H'\rightarrow G$ of a same base graph $G$ is a graph morphism $\phi: H \rightarrow H'$, such that $\pi'\circ\phi = \pi$, i.e. such that the following diagram commutes:
\begin{center}
% https://tikzcd.yichuanshen.de/#N4Igdg9gJgpgziAXAbVABwnAlgFyxMJZABgBpiBdUkANwEMAbAVxiRAAkQBfU9TXfIRQAmclVqMWbdgHJuvEBmx4CRAIyk14+s1aIQAcW7iYUAObwioAGYAnCAFskZEDghINIBnQBGMBgAK-CpCXjDWOCDUOlL6ADpxaAAWWPI29k6ILm5IohK6bAloqdTefoHBgmy2WGZJkTzpjh7UOYh5MXogRVhyXBRcQA
\begin{tikzcd}
H \arrow[rr, "\phi"] \arrow[rd, "\pi"'] &   & H' \arrow[ld, "\pi'"] \\
                                        & G &                      
\end{tikzcd}
\end{center}
It is an isomorphism of covering space if $\phi$ is an isomorphism of graphs. 
\end{definition}

\begin{remark}\label{rem:finiteness}
%We further assume that the graphs considered have a local finiteness condition.
For any graph $G$ considered, we assume that it has bounded degree (i.e. $ \exists K\in \mathbb{N}$ such that $\forall v \in \mathcal{V}(G)$, $|\mathcal{N}(v)|<K$). Note that both finite graphs, and coverings of finite graphs satisfy this property. We refer to undirected graphs with this property simply as graphs.
% We further assume that $|\mathcal{N}_G(v)|$ is bounded, making $G$ into a \textbf{locally finite (or bounded degree or finite neighborhood type)} undirected graph, which we refer to as a graph. The order of $G$ is $|\mathcal{V}(G)|$.
\end{remark}

We now introduce the notion of colorings and neighborhood colorings, which are essential to define the color refinement, itself used to define the WL test.

% By means of Proposition~\ref{prop:algocovisom} and Theorem~\ref{thm:covisom} we can generated non-isomorphic graphs that are not WL distinguishable as follows:
\begin{algorithm}[tb]
\caption{Color refinement}
\label{alg:colorrefinement}
\begin{algorithmic}
   \STATE {\bfseries Input:} Graph $G$\\
   \STATE {Fix $c_0$ a constant coloring of $G$, set $i=0$}
   \STATE {Fix an injective $HASH: CN(G)\rightarrow\mathbb{N}$}
   \WHILE{$c_i \ \& \ c_{i-1}$ induce different partitions of $\mathcal{V}(G)$}
   \FOR{$v \in \mathcal{V}(G)$}
   \STATE {set $c_{i}(v) = HASH(\mathcal{V}(G_v) \xrightarrow[]{c_{i-1}|_{G_v}} \mathbb{N})$}
   \ENDFOR
   \STATE {i = i+1}
   \ENDWHILE
   \STATE{\textbf{return }$c_i$}
   \end{algorithmic}
\end{algorithm}

\begin{definition}
A coloring of a graph $G$ is a mapping $c:\mathcal{V}(G)\rightarrow \mathbb{N}$, where each $n\in \mathbb{N}$ is thought of as a color. A colored neighborhood of $v\in\mathcal{V}(G)$ is a coloring of $G_v$. Given a coloring $c$ of $G$, restricting it to $G_v$ induces a neighborhood coloring $c|_{G_v}$ for every $v$. We let $\mathcal{G}$ be a set of representatives for the isomorphism classes of neighborhoods (that is, for each $v\in\mathcal{V}(G)$ there is a $K\in \mathcal{G}$ such that $G_v\cong K$). Define the set of neighborhood colourings $CN(G) = \{c:K\rightarrow\mathbb{N}, K\in \mathcal{G}\}$. For a graph $G$ satisfying the finiteness assumption of Remark~\ref{rem:finiteness}, we may take $\mathcal{G}$ to be finite, which ensures the existence of an injective map $HASH:CN(G)\rightarrow \mathbb{N}$. 
% %
% For a graph $G$ we denote the set of colorings of neighborhood isomorphism classes as $CN(G)$ (i.e. members of $CN(G)$ are colourings $c:K\rightarrow \mathbb{N}$ such that for each $v$, there is a $c:K\rightarrow \mathbb{N}$ such that $K\cong G_v$). By the finiteness assumption in Remark~\ref{rem:finiteness}, we may take the domain
% %
% elements of $CN(G)$ are represented by colorings $G_v\rightarrow \mathbb{N}$ for some $v$, in such a way that if $G_v$ and $G_{w'}$ are isomorphic, they must have the same coloring). Finally, by the finiteness assumption in Remark~\ref{rem:finiteness} we can assume that for any graph $G$, we have an injective mapping $HASH: CN(G) \rightarrow \mathbb{N}$.
\end{definition}
%We assume that we have an injective mapping $cl: CN \rightarrow \mathbb{N}$ from the set of colored neighborhood $CN $ of $G$ to the set of colors $\mathbb{N}$. If $G$ is \textbf{locally finite}, there are finitely many neighborhood up to isomorphism, so such an injective map exists \cite{}.
% Throughout this paper, we use color refinement scheme and WL test interchangeably \footnote{We only consider the $2$-dimensional WL test, which is equivalent to color refinement by Corollary~$3.5.8$ in \cite{descriptivecomplexity}. This is often also referred to as the $1$-WL test by the GNN community.}, as usually done in the GNN community. We now define these terms:

\begin{definition}
The color refinement of a graph $G$ is the coloring process described in Algorithm~\ref{alg:colorrefinement}. It is therefore a sequence of colorings $c_i$ of $G$. See Remark~\ref{rem:multisets} for a comment on the approach using multisets, and Remark~\ref{rem:convergence} for one on convergence of the algorithm. We abuse notation and pretend that it does not depend on $HASH$.
\end{definition}

\begin{remark}\label{rem:multisets}
The usual exposition to the WL test would replace our definition of $CN(G)$ by multisets, where at stage $i$ of Algorithm~\ref{alg:colorrefinement} the multiset associated to $v\in\mathcal{V}(G)$ is $\{c_i(v), \{\{ c_i(u), u\in \mathcal{N}_G(v)\}\}\}$, and the HASH function goes from the set of multisets, e.g. \cite{Morris_Ritzert_Fey_Hamilton_Lenssen_Rattan_Grohe_2019}.
\end{remark}

\begin{remark}\label{rem:convergence}
For finite graphs, Algorithm~\ref{alg:colorrefinement} converges in $\mathcal{O}(|\mathcal{V}(G)|)$. We refer to this converged coloring, if it exists, as the stable coloring of a graph. For infinite graphs, the algorithm does not necessarily converge. However, for an infinite graph that covers a finite graph, the color refinement converges by Lemma~\ref{lem:bigcovering}, making Lemma~\ref{lem:trees} applicable to universal covers defined in Definition~\ref{def:universalcover}.
%ALTERNATIVE: Only define for finite graphs above, and define the WL scheme for infinite covers only...
\end{remark}

\begin{definition}
% The color refinement scheme on a graph $G$ is defined recursively and works as follows:
% \begin{itemize}
%     %\vskip -0.4in
%     \item Initialize: color every node by the same color in $\mathbb{N}$, fix an injective $cl: CN(G) \rightarrow \mathbb{N}$.
%     %\vskip -0.4in
%     \item Step $i$: color a node $v$ by $cl(\mathcal{V}(G_v)\xrightarrow[]{c_{v, i-1}} \mathbb{N})$, where $c_{v, i-1}$ is the coloring of $G_v$ at step $i-1$.
%     %$cl(\mathcal{N}_{i-1}(v))$, where $\mathcal{N}_i(v)$ is the colored neighborhood of $v$ at step $i-1$.
% \end{itemize}
The WL test between $G$ and $G'$ consists of running the color refinement on the disjoint union $G \bigoplus G'$ until convergence, if both graphs have the same number of nodes of each color in the stable coloring, we say that $G$ and $G'$ are WL equivalent (they succeed WL) otherwise we say that they are not WL equivalent (they fail WL).
\end{definition}
% \newpage

\subsection{Lemmas and Proofs}
In this subsection, we present the main Lemmas and proofs needed for Theorem~\ref{thm:main}. We first recall that the WL test solves the isomorphism problem for trees, whose proof can be found in Appendix~\ref{app:isotrees}.

\begin{lemma}\label{lem:trees} Let $T$ and $T'$ be (possibly infinite) trees with a stable coloring, then $T$ and $T'$ are indistinguishable by WL if and only if they are isomorphic.
\end{lemma}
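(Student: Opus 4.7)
The plan is to prove the two directions separately. The ``if'' direction (isomorphic implies WL-indistinguishable) is essentially automatic: a graph isomorphism $\phi:T\to T'$ preserves the initial constant coloring, and an easy induction on the refinement step $i$ shows that $\phi$ carries $c_i$ on $T$ to $c_i$ on $T'$, hence the two stable colorings induce identical color-class multisets. I would dispatch this in one short paragraph.

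For the hard ``only if'' direction, I would proceed constructively by building a rooted tree isomorphism. Since the stable colorings of $T$ and $T'$ contain the same multiset of colors, pick any $r\in\mathcal{V}(T)$ and some $r'\in\mathcal{V}(T')$ with $c(r)=c(r')$; this will be the root pair. The central lemma to establish is: \emph{whenever $c(v)=c(v')$ in a stable coloring of a tree, for every neighbor $u$ of $v$ there is a neighbor $u'$ of $v'$ with $c(u)=c(u')$, and moreover one can pair them up in a color-preserving bijection}. This is exactly what stability says: the neighborhood-coloring hash at $v$ equals the one at $v'$, so the multisets $\{\!\{c(u):u\in\mathcal N(v)\}\!\}$ and $\{\!\{c(u'):u'\in\mathcal N(v')\}\!\}$ coincide. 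Iterating this on the subtrees ``below'' each matched pair $(u,u')$ — which are themselves trees because $T$ and $T'$ are trees, and which inherit a stable coloring from the ambient one — gives a recursive construction of a rooted tree isomorphism $T\to T'$.

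The main obstacle is that the lemma allows $T,T'$ to be infinite (the statement is stated for universal covers in mind, cf.\ Remark~\ref{rem:convergence}). In the finite case the recursion terminates at the leaves and nothing more is needed. In the infinite case one cannot simply ``finish'' the recursion; instead I would invoke a K\"onig-lemma / compactness argument. The bounded-degree assumption of Remark~\ref{rem:finiteness} is crucial here: the set of color-preserving rooted isomorphisms between the depth-$d$ balls around $r$ and $r'$ is finite at each $d$, nonempty (by the recursive step above), and the depth-$(d+1)$ extensions restrict to depth-$d$ ones. This yields an inverse system of nonempty finite sets, hence a compatible family, hence a rooted isomorphism $(T,r)\to(T',r')$. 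Since $T$ is connected, this is a graph isomorphism.

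Finally, I would note that the stability hypothesis in the lemma statement is essential: without stability, matching multisets at a single iteration only guarantees local agreement of neighborhoods and not the rooted-tree agreement at all depths that the recursion requires. I expect the write-up to be around one page, with the compactness step being the only point requiring care.
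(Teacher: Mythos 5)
Your proposal is correct, and its combinatorial engine is the same as the paper's: stability of the coloring plus injectivity of $HASH$ forces the neighbor-color multisets at $v$ and $v'$ to agree whenever $c(v)=c(v')$, which in a tree lets one match children color-preservingly. Where you genuinely diverge is in handling the infinite case. The paper runs a single greedy construction: it extends one partial isomorphism vertex-by-vertex in BFS order, maintaining at every finite stage a color-preserving isomorphism of finite subtrees, and then takes the union of the increasing chain; the BFS discipline together with bounded degree is what guarantees that the union of domains is $T$ and the union of images is $T'$ (a direct-limit argument, with no compactness). You instead prove, for each $d$, that the set $S_d$ of color-preserving rooted isomorphisms of depth-$d$ balls is finite and nonempty, and extract a coherent family by K\"onig's lemma (an inverse-limit argument). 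Your route buys surjectivity for free --- rooted ball isomorphisms automatically hit all of the ball around $r'$, so you never need the paper's argument that the greedy image exhausts $T'$ --- at the cost of the extra compactness layer; note that establishing nonemptiness of $S_d$ is essentially the same recursion the paper performs greedily, so the compactness step is added work rather than replaced work. One phrase to fix in your write-up: the subtrees below a matched pair do \emph{not} ``inherit a stable coloring'' --- the restriction of a stable coloring to a subtree need not be stable for that subtree, since cutting off the parent changes neighborhoods. What your recursion actually uses is stability in the ambient trees: at a matched pair $(u,u')$ whose parents are already matched and have equal colors, the full neighbor multisets agree, and removing one copy of the common parent color from each leaves equal children multisets. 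With that justification substituted, your argument is sound. (You also spell out the easy ``isomorphic $\Rightarrow$ indistinguishable'' direction, which the paper leaves implicit.)
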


% \begin{lemma}\label{lem:covercolor}
% Let $c: H \rightarrow G$ be a covering map. At each step of the WL procedure, a node $h\in H$ has the same color as its image $c(h)$. 
% \end{lemma}
% \begin{proof}
% At the first stage, both graphs are colored by one color so the statement holds.
% Suppose the statement holds at stage $i$, then by definition of a cover, the subgraph induced by $\{h\}\bigcup N(h)$ is mapped isomorphically to the subgraph induced by $\{c(h)\bigcup N(c(h))\}$, so the updated colors are the same.
% \end{proof}
We show that coverings keep the color refinement invariant:
% \begin{lemma}
% The color refinement of a graph is fully determined by the color refinement of any graph it covers or any graph covering it.
% That is, let $G$, $H$ and $K$ be graphs, and $c_1: G \rightarrow K$ and $c_2: H \rightarrow G$ be covering maps. At each step of the $1$ WL procedure, a node $g\in G$ has the same color as its image $c_1(g)$ and the same color as any vertex $h\in c_2^{-1}(g)$.\label{lem:oldbigcovering}
% \end{lemma}
\begin{lemma}\label{lem:bigcovering}
The color refinement of a graph is fully determined by the color refinement of any graph it covers or any graph covering it.
That is, let $G$, $H$ be graphs, and $\pi: H \rightarrow G$ be a covering map. If $g\in G$ has a given color at step $i$, then any $h\in \pi^{-1}(g)$ has the same color at step $i$. Conversely, if $\pi(h)$ has a given color at step $i$, then $h\in H$ has the same color at step $i$.
\end{lemma}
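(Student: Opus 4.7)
The plan is a straightforward induction on the step $i$ of the color refinement algorithm. The statement to prove, after unwinding, is simply that $c_i^H(h) = c_i^G(\pi(h))$ for every $h\in\mathcal{V}(H)$ and every $i\geq 0$; the two implications in the lemma's statement are just the two readings of this equality.

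For the base case $i=0$, both algorithms start from the constant coloring, so the claim is immediate. For the inductive step, fix $h\in\mathcal{V}(H)$ and set $g=\pi(h)$. The key property of a covering, from the definition, is that $\pi$ restricts to a graph isomorphism $\pi|_{H_h}\colon H_h\xrightarrow{\sim}G_g$ between the neighborhoods. By the inductive hypothesis applied pointwise to every vertex of $H_h$, the colorings $c_{i-1}^H|_{H_h}$ and $c_{i-1}^G|_{G_g}$ correspond to each other under this isomorphism. Hence, viewing both neighborhood colorings as colorings of a common representative $K\in\mathcal{G}$ with $K\cong G_g\cong H_h$, they determine the same element of $CN$. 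Using a common injective $HASH$ on $CN(G)\cup CN(H)$ (which we may do since the covering ensures the two neighborhood type sets coincide), we conclude $c_i^H(h)=HASH(c_{i-1}^H|_{H_h})=HASH(c_{i-1}^G|_{G_g})=c_i^G(g)$, completing the induction.

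The only real subtlety, which is the single step I expect to require care, is the bookkeeping around $CN$ and $HASH$: the algorithm as stated picks its own $HASH$ function for each input graph, so one has to note either that the algorithm is insensitive to the particular injective choice of $HASH$ (which is mentioned in the text when it says ``we abuse notation and pretend that it does not depend on $HASH$''), or equivalently that one may pick a common $HASH$ that works for both $G$ and $H$ simultaneously. Once this is agreed upon, the argument above is essentially a one-line diagram chase saying that covering maps respect neighborhood-isomorphism types, and neighborhood-isomorphism types (together with the colors on them) are precisely what determines the next color.
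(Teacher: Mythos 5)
Your proof is correct and follows essentially the same route as the paper's: induction on the refinement step, with the base case given by the constant coloring and the inductive step using the fact that a covering restricts to an isomorphism $H_h \cong G_{\pi(h)}$ of neighborhoods, so the inherited neighborhood colorings agree and the updated colors coincide. Your extra care about fixing a common injective $HASH$ (or noting the algorithm's insensitivity to its choice) is a detail the paper glosses over, but it does not change the argument.
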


\begin{proof}
We prove the result inductively.
At the first stage, both graphs are colored by one color so the statement holds.
Suppose the statement holds at stage $i$, then by definition of a cover, the subgraph $H_{h}\subseteq H$ is mapped isomorphically to the subgraph $G_{\pi(h)} \subseteq G$. By assumption the $i$th coloring is fixed by $\pi$, so the neighborhood coloring is the same, and so is the updated colors.
\end{proof}

We have seen that covering maps respect the color refinement of graphs. Hence a graph and its cover look alike from the WL perspective. But for finite graphs, the WL test always distinguishes two graphs with different number of nodes. The dichotomy between order and covers is in fact sufficient to fully characterise WL equivalence classes. To formalize the interplay between covers and order, we introduce the degree of covers, for which we need the following result:

\begin{lemma}\label{lem:degree}
Let $H$ and $G$ be connected graphs, and let $\pi: H\rightarrow G$ be a covering map. For any two $v, \ v'\in \mathcal{V}(G)$, the fibers of $v$ and $v'$ are of the same cardinality, i.e. $|\pi^{-1}(v)|=|\pi^{-1}(v')|$. In particular, $|\mathcal{V}(H)| = |\pi^{-1}(v)|\cdot|\mathcal{V}(G)|$.
\end{lemma}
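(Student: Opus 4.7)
The plan is to first establish the equality of fiber sizes for \emph{adjacent} vertices, and then bootstrap to arbitrary pairs using connectivity of $G$.

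For the adjacent case, fix an edge $vv' \in \mathcal{E}(G)$. I would construct a bijection $f: \pi^{-1}(v) \to \pi^{-1}(v')$ as follows. For each $\tilde v \in \pi^{-1}(v)$, the defining property of a covering map says that $\pi$ restricts to an isomorphism $H_{\tilde v} \to G_v$. Since $v' \in \mathcal{N}_G(v)$, there is a \emph{unique} neighbor $\tilde v' \in \mathcal{N}_H(\tilde v)$ with $\pi(\tilde v') = v'$; set $f(\tilde v) := \tilde v'$. The symmetric construction, using that $\pi$ is also an isomorphism on $H_{\tilde v'}$, produces a map $g: \pi^{-1}(v') \to \pi^{-1}(v)$. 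To see that $f$ and $g$ are mutual inverses: given $\tilde v \in \pi^{-1}(v)$, the vertex $g(f(\tilde v))$ is, by construction, a neighbor of $f(\tilde v)$ in $\pi^{-1}(v)$; but $\tilde v$ is also such a neighbor, and uniqueness of lifts inside $H_{f(\tilde v)}$ forces $g(f(\tilde v)) = \tilde v$. Hence $|\pi^{-1}(v)| = |\pi^{-1}(v')|$.

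For general $v, v' \in \mathcal{V}(G)$, I would invoke connectivity of $G$ to choose a path $v = v_0, v_1, \ldots, v_n = v'$ with $v_i v_{i+1} \in \mathcal{E}(G)$, and compose the bijections from the previous step to obtain a bijection $\pi^{-1}(v) \to \pi^{-1}(v')$. This yields the common fiber cardinality, call it $d$, which one may call the degree of the cover.

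For the order formula, note that $\pi$ being a map of sets gives the partition
\[ \mathcal{V}(H) = \bigsqcup_{v \in \mathcal{V}(G)} \pi^{-1}(v), \]
where the union is disjoint and covers all of $\mathcal{V}(H)$ because $\pi$ is surjective on vertices. Summing cardinalities and using that each fiber has size $d$ gives $|\mathcal{V}(H)| = d \cdot |\mathcal{V}(G)|$, as claimed.

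I do not anticipate a real obstacle; the only delicate point is verifying that $f$ defined above is well defined and bijective, which rests entirely on the \emph{uniqueness} clause in the definition of a covering (the "$\exists !$" in Definition of covering). Connectivity is then used only in the mildest way, to string together adjacent-vertex bijections along an arbitrary path, and no appeal to infinite cardinality arithmetic is needed beyond the fact that a bijection preserves cardinality.
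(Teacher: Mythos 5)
Your proof is correct and takes essentially the same approach as the paper: both establish that the fiber cardinality is locally constant via the unique-lift property of a covering at adjacent vertices, and then propagate the equality along paths using connectivity of $G$. The only cosmetic differences are that you exhibit an explicit bijection with a two-sided inverse where the paper derives two opposite inequalities by symmetry, and that you spell out the fiber-partition argument for $|\mathcal{V}(H)| = d\cdot|\mathcal{V}(G)|$, which the paper leaves implicit.
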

\begin{proof}
We show that the size of fibers is locally constant which implies the result since the graphs are path connected. Let $v\in \mathcal{V}(G)$, assume that $d=|\pi^{-1}(v)|$, and let $u\in \mathcal{N}_G(v)$. Since $\pi$ is a cover, each $\tilde v \in \pi^{-1}(v)$ has precisely one neighbor $\tilde u \in \mathcal{N}_H(\tilde v)$ with $\pi(\tilde u) = u$, showing that $|\pi^{-1}(u)|\geq|\pi^{-1}(v)|$. By symmetry, swapping $u$ and $v$ gives $|\pi^{-1}(v)|\geq|\pi^{-1}(u)|$ and concludes the proof.
\end{proof}
\newpage
\begin{definition}
Let $H$ and $G$ be connected graphs, and let $\pi: H\rightarrow G$ be a covering map. The degree $d_\pi\in \mathbb{N}$ (or number of sheets) of $\pi$ is the size of any pre-image of a node in $G$ (i.e. $d_\pi=|\pi^{-1}(v))|$), which is well defined by Lemma~\ref{lem:degree}.
\end{definition}

Using the above definition, we can prove a useful statement about the WL test and coverings:

\begin{corollary}\label{cor:combine}
Let $G$ and $G'$ be two connected graphs of equal order. Let $\pi: H\rightarrow G$ and $\pi': H'\rightarrow G'$ be two covers such that $H$ and $H'$ are isomorphic. Then $G$ and $G'$ are WL equivalent.
\end{corollary}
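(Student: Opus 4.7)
The plan is to lift the question along the two covers, resolve it trivially using $H\cong H'$, and then push the result back down to $G$ and $G'$ via the two lemmas above. I would organise the argument in three steps.

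\emph{Lifting.} The disjoint-union map $\pi\oplus\pi'\colon H\oplus H'\to G\oplus G'$ is itself a covering, because $\pi$ and $\pi'$ are each coverings on their own component and there are no edges between the two summands on either side. Applying Lemma~\ref{lem:bigcovering} to this cover, the stable coloring $c$ of $G\oplus G'$ and the stable coloring $\tilde c$ of $H\oplus H'$ are related by $\tilde c(h)=c(\pi(h))$ for $h\in\mathcal{V}(H)$, and analogously with $\pi'$ for $h'\in\mathcal{V}(H')$.

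\emph{Matching on $H\oplus H'$.} Any isomorphism $\psi\colon H\to H'$ is automatically colour-preserving in the stable coloring of $H\oplus H'$, since colour refinement depends only on the isomorphism type of coloured local neighbourhoods. Hence for every colour $k$,
\[|\{h\in\mathcal{V}(H):\tilde c(h)=k\}|=|\{h'\in\mathcal{V}(H'):\tilde c(h')=k\}|.\]

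\emph{Descending via Lemma~\ref{lem:degree}.} By Lemma~\ref{lem:degree}, $\pi$ and $\pi'$ have well-defined degrees $d_\pi$ and $d_{\pi'}$, so the identity from the previous step becomes
\[d_\pi\cdot|\{g\in\mathcal{V}(G):c(g)=k\}|=d_{\pi'}\cdot|\{g'\in\mathcal{V}(G'):c(g')=k\}|.\]
From $|\mathcal{V}(G)|=|\mathcal{V}(G')|$ (hypothesis) and $|\mathcal{V}(H)|=|\mathcal{V}(H')|$ (isomorphism), together with the identities $|\mathcal{V}(H)|=d_\pi|\mathcal{V}(G)|$ and $|\mathcal{V}(H')|=d_{\pi'}|\mathcal{V}(G')|$ from Lemma~\ref{lem:degree}, we obtain $d_\pi=d_{\pi'}$; dividing then yields the desired equality of colour counts on $G$ and $G'$, i.e. WL equivalence.

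The only point requiring a moment's care, rather than a genuine obstacle, is to verify that running colour refinement on the joint graph agrees, summand by summand, with running it on each summand in isolation. This is immediate because neighbourhoods in a disjoint union lie entirely inside a single component, so the local HASH updates never mix information across summands. With this remark in hand the proof is essentially bookkeeping around the two lemmas and the constancy of fibre sizes.
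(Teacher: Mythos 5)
Your proof is correct and takes essentially the same route as the paper's: transfer colorings along the covers via Lemma~\ref{lem:bigcovering}, use the equal orders of $G$ and $G'$ together with Lemma~\ref{lem:degree} to force $d_\pi = d_{\pi'}$, and divide color counts. The only difference is that you explicitly verify the disjoint-union subtlety (that $\pi\oplus\pi'$ is itself a covering of $G\oplus G'$, so the joint refinement agrees with the componentwise one), a point the paper's proof leaves implicit.
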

\begin{proof}
Since $H$ and $H'$ are isomorphic, their stable coloring must be the same. By Lemma~\ref{lem:bigcovering}, the coloring of $G$ and $G'$ are therefore the same. Since $G$ and $G'$ have the same order, both covers must have the same degree. We can conclude that the number of nodes of a given color in $G$ (resp $G'$) is given by the number of nodes of that color in $H$ (resp $H'$) divided by $d$.
%in is fully determined by the color of it's pre-image and the degree of the cover.
\end{proof}
% \begin{corollary}\label{cor:covers}
% Two covers of same degrees of a graph $G$ are WL indistinguishable. In particular, two infinite degree covers of a graph cannot be distinguished.
% \end{corollary}
% \begin{proof}
% By Lemma~\ref{lem:bigcovering}, all elements of the pre-image of a node of $G$ have the same color. Since the size of a pre-image of a node of $G$ is equal to the degree, we conclude that the number of nodes of a given color is fully determined by the color of its image and the degree of the cover.
% \end{proof}
% \begin{corollary}\label{cor:finitecover}
% Let $G$ and $G'$ be two graphs of same cardinality. Let $H$ be a graph that covers both graphs, i.e. there are two covering maps $\pi: H\rightarrow G$ and $\pi': H \rightarrow G'$. Then $G$ and $G'$ are indistinguishable by WL.
% \end{corollary}
% \begin{proof}
% By Lemma~\ref{lem:bigcovering}, the coloring of both $G$ and $G'$ are fully determined by the coloring of $H$. Moreover, since both graphs have the same cardinality, the covers must have the same degree $d=\frac{|\mathcal{V}(H)|}{|\mathcal{V}(G)|}$. Therefore, at each step the number of nodes of a specific color in both $G$ and $G'$ is equal to the number of node of the that color in $H$ divided by $d$ which concludes the proof.
% \end{proof}
At last, we introduce the universal cover of a graph $G$:
%which is a tree that covers $G$.
%with the universal property that it covers all other covers of $G$. 

\begin{definition}\label{def:universalcover}
The universal cover of a connected graph $G$ is a cover $\pi: T \rightarrow G$ where $T$ is a connected tree. This map is unique up to cover isomorphism, and covers all other covers of $G$ \cite{Hatcher:478079}. See Figure~\ref{fig:universalcover} for an illustration.
\end{definition}

\begin{figure}[ht]
    \vskip 0.2in
    \begin{center}
    \centerline{\includegraphics[width=0.8\columnwidth]{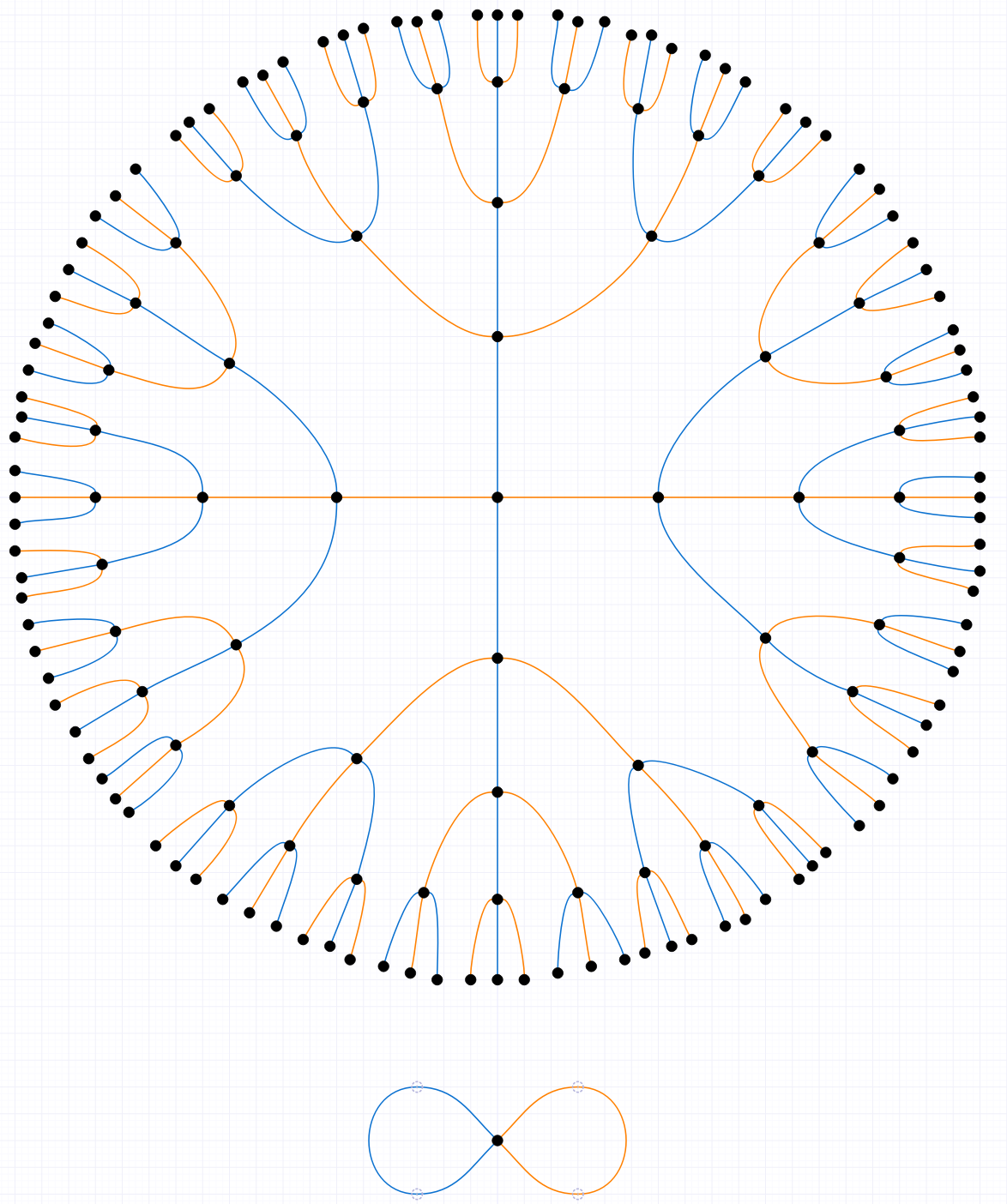}}
    \caption{Finite subgraph of the infinite universal cover (top) of the rose with $2$ petals (bottom). We apologies since the rose is not a graph as per Definition~\ref{def:graphs}. To make it into a graph we would have to subdivide each edge twice, and do the same each edge of the universal cover. The finite subgraph of the universal cover is the $4$ hop neighborhood of the central node.}
    \label{fig:universalcover}
    \end{center}
    \vskip -0.2in
\end{figure}
% We can build the universal cover $T$ of a graph $G$ as the limiting graph of the following iterative process: \textbf{TODO:} add figure of the process and a nice picture of a universal cover
% \begin{itemize}
%     \item Pick a node $v\in \mathcal{V}(G)$, add a copy $\tilde v$ to $T$
%     \item Add one node per neighbor of $v$ in $G$, and an edge connecting $\tilde v$ to each new node
%     \item For each new node, repeat the previous step replacing $v$ by the new node
% \end{itemize}
% The map $\pi: T \rightarrow G$ sends a node $\tilde v \in \mathcal{V}(T)$ to the node $v\in \mathcal{V}(G)$ that was used to add it.
\begin{remark}
If the Euler characteristic $\chi(G) :=|\mathcal{V}|-|\mathcal{E}| $, of a finite connected graph $G$ is $\leq 0$ (i.e. $G$ has a loop), then the universal cover is infinite, otherwise $G$ is a tree.
\end{remark}
Finally, we prove the main Lemma for the second direction of Theorem~\ref{thm:main}, and then prove the main theorem.

\begin{lemma}\label{lem:universalcover}
Two connected graphs that are not distinguishable by WL have isomorphic universal covers.
\end{lemma}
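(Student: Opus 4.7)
The plan is to reduce the claim to Lemma~\ref{lem:trees} by showing that the universal covers $\pi:T\to G$ and $\pi':T'\to G'$ are themselves WL-indistinguishable; since $T$ and $T'$ are trees (Definition~\ref{def:universalcover}), Lemma~\ref{lem:trees} then delivers $T \cong T'$ immediately. To set this up I would run Algorithm~\ref{alg:colorrefinement} on $G \oplus G'$ with one shared HASH, so that colors are directly comparable across the two graphs. Applying Lemma~\ref{lem:bigcovering} to the cover $\pi \oplus \pi':T \oplus T' \to G \oplus G'$, every $\tilde v \in T$ (resp.\ $\tilde v' \in T'$) inherits at each step $i$ the color of $\pi(\tilde v)$ (resp.\ of $\pi'(\tilde v')$), and convergence of the refinement on $T \oplus T'$ is granted by Remark~\ref{rem:convergence}. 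The WL hypothesis says that each color class of the stable coloring has matching size on $G$ and $G'$.

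It then remains to compare, color by color, the sizes of the color classes in $T$ versus $T'$, and a short dichotomy handles this. If $G$ is a tree then $T \cong G$; because one refinement step already reveals the degree sequence, WL-equivalence forces $|\mathcal{E}(G)|=|\mathcal{E}(G')|$, hence $\chi(G)=\chi(G')$ and $G'$ is also a tree with $T' \cong G'$, so Lemma~\ref{lem:trees} applied directly to $G,G'$ yields $G \cong G'$ and therefore $T \cong T'$. Otherwise both $T$ and $T'$ are infinite trees; by Lemma~\ref{lem:degree} each fiber of $\pi$ and $\pi'$ has a single cardinality, which must be infinite, so every non-empty color class of $T$ (resp.\ $T'$) is countably infinite and the counts on the two sides trivially agree. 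In this case $T$ and $T'$ are WL-equivalent, and Lemma~\ref{lem:trees} finishes the proof.

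The delicate point, and the place I expect to have to write with care, is the cardinality bookkeeping on the infinite covers: one has to use Lemma~\ref{lem:bigcovering} as an \emph{honest} step-by-step equality of colors (which is why the refinement is run jointly on $G \oplus G'$ with a single HASH rather than separately on each graph) and then combine it with Lemma~\ref{lem:degree} to know that all fibers have a common cardinality. Once that is in place, the reduction to Lemma~\ref{lem:trees} is essentially automatic, and the non-tree case is actually easier than the tree case because cardinal arithmetic absorbs all finite mismatches.
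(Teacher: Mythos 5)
Your proposal takes essentially the same route as the paper: the paper's proof of Lemma~\ref{lem:universalcover} is exactly your two-step reduction --- by Lemma~\ref{lem:bigcovering}, WL-indistinguishability of $G$ and $G'$ lifts to their universal covers $T$ and $T'$, and Lemma~\ref{lem:trees} then yields $T\cong T'$. In fact the paper is far terser than you are: it asserts the lift in a single sentence and does none of the cardinality bookkeeping you identify as the delicate point, so your write-up (joint refinement on $G\oplus G'$ with one HASH, the cover $\pi\oplus\pi'$ of the disjoint union, and the counting dichotomy) is a strictly more careful version of the same argument, and it is correct whenever $G$ and $G'$ are finite.

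One caveat on the detail you added. Your tree case argues via $|\mathcal{E}(G)|=|\mathcal{E}(G')|$ and $\chi(G)=\chi(G')$, which silently assumes $G$ and $G'$ are finite; the paper's standing convention (Remark~\ref{rem:finiteness}) allows infinite bounded-degree graphs, and for those the Euler-characteristic step is meaningless. Indeed the implication ``$G$ a tree and WL-equivalent to $G'$ $\Rightarrow$ $G'$ a tree'' fails in the infinite setting: the $3$-regular infinite tree and a connected $3$-regular infinite graph containing cycles both have the constant coloring as their stable coloring, with all color classes countably infinite, hence are WL-equivalent under the paper's counting-based definition. The lemma's conclusion still holds there (both universal covers are the $3$-regular tree), and it is precisely your second case --- all nonempty color classes countably infinite --- that handles it, so the fix is only to route the dichotomy by ``both finite'' versus ``both infinite'' rather than ``tree'' versus ``non-tree.'' Since the paper's own proof ignores these counting issues entirely, this is a flaw in your refinement rather than in the shared strategy, but it is worth repairing if the lemma is to be read at the level of generality the paper's definitions allow.
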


\begin{proof}
% The resulting (usually infinite) graph is a tree, since the only new edges we add connects a leaf to the previous graph. Moreover, the map $\pi: T \rightarrow G$ sending a node $v$ to the node from the graph that was used to add $v$ is a covering map since $T_v$ maps isomorphically onto $G_v$.
% We need to show that building two trees $T$ and $T'$ starting from graphs $G$ and $G'$ with same color refinement provides an isomorphism between $T$ and $T'$.
Let $G$ and $G'$ be two connected graphs which are not distinguishable by WL. Let $\pi: T \rightarrow G $ and $\pi': T' \rightarrow G'$ be their respective universal covers. By Lemma~\ref{lem:bigcovering}, since $G$ and $G'$ are indistinguishable by WL, so are $T$ and $T'$. By Lemma~\ref{lem:trees}, $T$ and $T'$ must therefore be isomorphic.
\end{proof}

\begin{proof}[Proof of Theorem~\ref{thm:main}]
One direction is obtained from Lemma~\ref{lem:universalcover}, the other from Corollary~\ref{cor:combine}.
%To descend to finite covers, it suffices to invoke Leighton's Theorem \cite{LEIGHTON1982231}, which says that two graphs have isomorphic universal covers if and only if they have isomorphic finite covers.
\end{proof}
% \begin{corollary}
% By Leighton's theorem \cite{LEIGHTON1982231}, we can reduce the above to 'they have isomorphic finite covers'.
% \end{corollary}

\section{The GraphCovers dataset}
Our goal is to efficiently generate graphs that are indisinguishable by the WL test, but are pairwise non-isomorphic. To do so, Theorem~\ref{thm:main} indicates that same degree covers of a common base graph are good candidates since they all share the same universal cover, and hence fool the WL test. However, we still need a way to distinguish non-isomorphic covers, and two covers can be non-isomorphic as covers but still be isomorphic as graphs. After showing that under mild assumptions on the base graph, both notions of isomorphism agree, we provide an efficient algorithm to test isomorphism of covers. We combine these into Algorithm~\ref{alg:gendataset} which returns GraphCovers$(G, d)$, a set of representatives for the isomorphism classes of connected, degree $d$ covers of a base graph $G$.
% We aim to generate non-isomorphic graphs that are indisinguishable the WL test. In the previous section we showed that it suffices to generate coverings of a base graph $G$. However, these coverings can be non-isomorphic as covers but still be isomorphic as graphs. In Theorem~\ref{thm:covisom} we show that under mild assumptions on the base graph $G$, isomorphism of covers is the same as isomorphism of graphs. We then give an efficient algorithm that can tell whether two covers of a base graph $G$ are isomorphic in Proposition~\ref{prop:algocovisom}. By the universal property of the universal cover \cite{Hatcher:478079}, two covers of a same base graph $G$ must both have the same universal cover as $G$. Therefore every cover of $G$ of equal  are indistiguishable by WL. We finally give an efficient method of sampling covers of a base graph $G$ which is based on the theory of random covers \cite{randomcovers2002}. In Coroally~\ref{cor:nbcovers} we show that the space of non-isomorphic covers grows factorially with the degree, given that the base graph has at least two loops. Combining these steps gives a method that generates arbitrarily many graphs that are indistinguishable by color refinement, but are two-by-two non-isomorphic, which we describe algorithmically in Algorithm~\ref{alg:gendataset}.
% We start by showing that under mild assumptions, the notions of isomorphism for graphs and for covers agree.

\begin{theorem}\label{thm:covisom}
If color refinement distinguishes every vertex of a base graph $G$ (i.e. the stable coloring $c:\mathcal{V}(G)\rightarrow \mathbb{N}$ is injective) then two covers $\pi:H\rightarrow G$ and $\pi':H' \rightarrow G$ are isomorphic as covers if and only if they are isomorphic as graphs.
\end{theorem}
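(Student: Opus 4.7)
The plan is as follows. The direction ``cover isomorphic $\Rightarrow$ graph isomorphic'' is immediate from Definition~\ref{def:coveringmorphism}, since a cover isomorphism is by definition a graph isomorphism. The content is in the converse: starting from any graph isomorphism $\phi: H \to H'$, I would show that $\phi$ automatically satisfies $\pi' \circ \phi = \pi$, making it a cover isomorphism without any further modification.

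The key observation is that under the injectivity hypothesis, the stable coloring on each cover already encodes the projection to $G$. First I would invoke Lemma~\ref{lem:bigcovering} (with a common HASH across $G$, $H$, $H'$, i.e.\ running refinement on the disjoint union) to obtain $c_H(h) = c(\pi(h))$ and $c_{H'}(h') = c(\pi'(h'))$ at every stage, in particular at the stable stage. Since $c$ is injective on $\mathcal{V}(G)$, the base vertex is recovered from the stable color: $\pi(h)$ is the unique $v \in \mathcal{V}(G)$ with $c(v) = c_H(h)$, and analogously for $\pi'$.

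Second, I would use the standard fact that a graph isomorphism preserves color refinement, i.e.\ $c_H(h) = c_{H'}(\phi(h))$ for every $h \in \mathcal{V}(H)$; this is a straightforward induction on the refinement step, since $\phi$ maps colored neighborhoods to colored neighborhoods and HASH is applied in the same way. Combining with the first step gives $c(\pi(h)) = c(\pi'(\phi(h)))$, and injectivity of $c$ then forces $\pi(h) = \pi'(\phi(h))$ pointwise. Hence $\pi' \circ \phi = \pi$, so $\phi$ is a morphism of covers, and being already a graph isomorphism it is a cover isomorphism.

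The main subtlety I expect is ensuring the stable coloring is well-defined on $H$ and $H'$ when they are infinite (e.g.\ universal covers in GraphCovers). This is handled by Remark~\ref{rem:convergence}: since $H$ and $H'$ cover the finite graph $G$, Lemma~\ref{lem:bigcovering} forces their refinement to stabilize at exactly the step where $G$'s refinement does. A secondary bookkeeping point is that to upgrade the ``partition refinement'' nature of WL into the pointwise color equalities used above, one must run refinement uniformly across $G$, $H$, $H'$, so that colors are literally equal and not merely ``corresponding'' across graphs.
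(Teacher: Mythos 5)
Your proposal is correct and follows essentially the same route as the paper's proof: use Lemma~\ref{lem:bigcovering} to identify fiber colors with base colors, observe that the graph isomorphism $\phi$ preserves the (jointly computed) stable coloring, and then use injectivity of the stable coloring on $G$ to force $\pi'\circ\phi=\pi$. The only cosmetic difference is that you prove color-preservation of $\phi$ by a direct induction, whereas the paper gets it for free from the remark that isomorphisms are covers and applies Lemma~\ref{lem:bigcovering} again; your extra care about a common $HASH$ and about stabilization on infinite covers is sound bookkeeping that the paper leaves implicit.
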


\begin{proof}
If two covers are isomorphic, then the underlying graphs are isomorphic. We thus focus on the other direction.
Let $\phi: H\rightarrow H'$ be an isomorphism of graphs. By Lemma~\ref{lem:bigcovering}, all nodes in $\pi^{-1}(v)$, and all nodes in $\pi'^{-1}(v)$ have the same color as $v$. Since isomorphisms are covers, colorings have to be preserved by the $\phi$ per Lemma~\ref{lem:bigcovering}. Since every node in $G$ have different colors, we must have that $\phi: H\rightarrow H'$ maps $\pi^{-1}(v)$ to $\pi'^{-1}(v)$, thus making the diagram in Definition~\ref{def:coveringmorphism} commute. 
% Similarly, an edge $vw$ must be mapped to an edge connecting a vertex of same color as $v$ to one of same color than $w$.
% By assumption, endpoint colors uniquely determines edges in $\mathcal{E}_G$, we conclude that $\pi'\circ \phi = \pi$.
\end{proof}

In order to build the promised algorithm, we need a way to check if two covers are isomorphic as covers. Lemma~\ref{lem:coverextendability} shows that this problem can be reduced to checking if we can extend a partial map defined on a single point to an isomorphism. In Proposition~\ref{prop:algocovisom} we show that extending such maps can be done in polynomial time.

\begin{lemma}\label{lem:coverextendability}
A morphism $\phi: H\rightarrow H'$ between connected coverings $\pi: H\rightarrow G$ and $\pi':H'\rightarrow G'$ of a connected base graph $G$ is uniquely determined by its value at any vertex.
\end{lemma}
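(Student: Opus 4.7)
The plan is to prove uniqueness by a connectedness-plus-induction argument, using the local rigidity that comes from the covering property of $\pi'$. Suppose $\phi,\psi: H\to H'$ are two cover morphisms that agree at some vertex $v_0\in\mathcal{V}(H)$, i.e.\ $\phi(v_0)=\psi(v_0)$. I want to show $\phi=\psi$ on all of $\mathcal{V}(H)$. Since $H$ is connected, every vertex $v\in\mathcal{V}(H)$ can be reached from $v_0$ by a finite path $v_0,v_1,\dots,v_n=v$, so it suffices to establish a single-edge rigidity claim and then iterate along the path.

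The local step to prove is: for any edge $vw\in\mathcal{E}(H)$, the value $\phi(w)$ is uniquely determined by $\phi(v)$. Since $\phi$ is a morphism of covers, $\pi'(\phi(w))=\pi(w)$. Since $\phi$ is a graph morphism and $vw\in\mathcal{E}(H)$, we also have $\phi(w)\in\mathcal{N}_{H'}(\phi(v))$. Now the covering property applied to $\pi'$ says that $\pi'$ restricts to an isomorphism from the neighborhood subgraph $H'_{\phi(v)}$ onto $G_{\pi'(\phi(v))}=G_{\pi(v)}$; in particular, there is exactly one neighbor of $\phi(v)$ in $H'$ whose $\pi'$-image is $\pi(w)$. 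Hence $\phi(w)$ must equal this unique neighbor, so it is forced by $\phi(v)$. The exact same reasoning applies to $\psi$, which forces $\psi(w)$ to equal the same unique neighbor.

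Combining the two observations, I obtain the lemma by induction on the path length: if $\phi(v_i)=\psi(v_i)$, then by local rigidity both $\phi(v_{i+1})$ and $\psi(v_{i+1})$ equal the unique neighbor of $\phi(v_i)=\psi(v_i)$ in $H'$ whose $\pi'$-image is $\pi(v_{i+1})$. Starting from the base case $\phi(v_0)=\psi(v_0)$, this propagates all the way to $v_n=v$, so $\phi(v)=\psi(v)$ for every $v\in\mathcal{V}(H)$.

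I do not expect any serious obstacle here: this is the standard unique-lifting argument of covering space theory, transferred to the combinatorial setting. The only subtlety is making sure the covering condition is invoked at the right place, namely that the local bijectivity of $\pi'$ on each neighborhood is precisely what rules out branching when extending $\phi$ one edge at a time; the connectivity of $H$ is what turns this local statement into a global one.
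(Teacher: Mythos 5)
Your proof is correct and follows essentially the same route as the paper's: the covering property of $\pi'$ forces a unique neighbor $\tilde u'\in \mathcal{N}_{H'}(\phi(v))\cap \pi'^{-1}(\pi(w))$ at each edge, and connectedness of $H$ propagates this local rigidity globally. If anything, your version is more explicit than the paper's, which states the local step and leaves the path-induction implicit (referencing the connectedness argument of Lemma~\ref{lem:degree}).
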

\begin{proof}
% First notice that as a consequence of Definition~$\ref{def:coveringmorphism}$, morphisms between covers must themselves be covering maps, so $\phi$ is a covering map.
As for Lemma~\ref{lem:degree} we work locally, by showing that there is no choice when extending a partial $\tilde v \mapsto \tilde v'$ to the neighbors $\tilde u\in \mathcal{N}_H(\tilde v)$. Let $\tilde v\in\mathcal{V}(H)$, and $\tilde v':=\phi(\tilde v)$, and let $\tilde u \in \mathcal{N}_H(\tilde v)$. Since $\pi'$ is a covering map, there is a unique $\tilde u' \in \mathcal{N}_{H'}(v') \cap \pi'^{-1}(\pi(\tilde u))$, and thus we must send $\tilde u$ to this $\tilde u'$ to preserve commutativity of the diagram.
\end{proof}

\begin{proposition}\label{prop:algocovisom}
Let $G$ be a connected graph satisfying the conditions of Theorem~\ref{thm:covisom}. There is a $\mathcal{O}(d^2\cdot\mathcal{E}(G))$ algorithm that checks if two degree $d$ covers of the base graph $G$ are isomorphic as graphs.
\end{proposition}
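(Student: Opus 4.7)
The plan is to combine Theorem~\ref{thm:covisom}, which reduces graph-isomorphism of covers of $G$ to cover-isomorphism whenever $G$ has an injective stable coloring, with Lemma~\ref{lem:coverextendability}, which guarantees that a cover morphism between connected covers is completely determined by its value at a single vertex. Together these let us brute-force over the few starting assignments that could possibly extend to an isomorphism.

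Concretely, I would fix once and for all a base vertex $v_0\in\mathcal{V}(G)$ and a preimage $\tilde v_0\in\pi^{-1}(v_0)\subseteq\mathcal{V}(H)$. Any cover isomorphism $\phi:H\to H'$ must satisfy $\phi(\tilde v_0)\in\pi'^{-1}(v_0)$, and this fiber has cardinality $d$ by Lemma~\ref{lem:degree}, giving only $d$ candidate starting assignments. For each candidate image $\tilde v_0'\in\pi'^{-1}(v_0)$, I would attempt to construct $\phi$ by a breadth-first traversal of $H$ starting from $\tilde v_0$: whenever the BFS is at a vertex $\tilde v$ with $\phi(\tilde v)$ already defined and steps along an edge to a neighbor $\tilde u$, Lemma~\ref{lem:coverextendability} forces $\phi(\tilde u)$ to be the unique neighbor of $\phi(\tilde v)$ in $H'$ lying over $\pi(\tilde u)$. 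If $\phi(\tilde u)$ is currently undefined I would set it to this value; otherwise I would check consistency, and a mismatch immediately invalidates the current starting choice. After the BFS I would verify that $\phi$ is injective on $\mathcal{V}(H)$, which suffices for it to be bijective since $|\mathcal{V}(H)|=|\mathcal{V}(H')|=d\cdot|\mathcal{V}(G)|$; edge bijectivity is then automatic because edges are preserved by construction and both edge sets have the same cardinality $d\cdot|\mathcal{E}(G)|$. The algorithm outputs "isomorphic" as soon as one starting choice succeeds, and "not isomorphic" otherwise; correctness in both directions is immediate from Theorem~\ref{thm:covisom} and Lemma~\ref{lem:coverextendability}.

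For the complexity, we perform $d$ independent BFS runs, each visiting $|\mathcal{E}(H)|=d\cdot|\mathcal{E}(G)|$ edges, and the inner step requires, given $\tilde v'\in\mathcal{V}(H')$ and a base neighbor $u\in\mathcal{N}_G(\pi'(\tilde v'))$, the unique $\tilde u'\in\mathcal{N}_{H'}(\tilde v')$ with $\pi'(\tilde u')=u$. This lookup can be made amortized $\mathcal{O}(1)$ by a one-time precomputation: using the injective stable coloring of $G$ from Theorem~\ref{thm:covisom}, we index each such neighbor in a table of total size $\mathcal{O}(d\cdot|\mathcal{E}(G)|)$, built once in the same time. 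Summing gives the claimed $\mathcal{O}(d^2\cdot|\mathcal{E}(G)|)$ bound.

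The main obstacle I anticipate is purely book-keeping: making sure conflict detection during the BFS, the uniqueness-of-lift lookup, and the final bijectivity check all fit within budget. In particular, one must guarantee that the BFS touches each edge of $H$ only $\mathcal{O}(1)$ times per run and that the lookup truly runs in constant time after the precomputation; both rely essentially on the fact that the injective stable coloring of $G$ lets us identify fibers by base-vertex color, which is exactly the hypothesis supplied by Theorem~\ref{thm:covisom}.
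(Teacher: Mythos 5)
Your proposal is correct and follows exactly the paper's argument: reduce graph isomorphism to cover isomorphism via Theorem~\ref{thm:covisom}, use Lemma~\ref{lem:coverextendability} to note a cover isomorphism is forced by the image of one fixed vertex, try all $d$ candidates in the fiber ${\pi'}^{-1}(\pi(\tilde v_0))$, and extend edge by edge with consistency checks, giving $d\cdot\mathcal{O}(\mathcal{E}(H)) = \mathcal{O}(d^2\cdot\mathcal{E}(G))$. Your write-up is in fact more explicit than the paper's (BFS scheduling, the injectivity/edge-counting argument for bijectivity, and the constant-time lift lookup), but these are refinements of the same proof rather than a different route.
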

\begin{proof}
Let $ \pi: H\rightarrow G$ and $\pi': H' \rightarrow G$ be two covers of degree $d$. By Theorem~\ref{thm:covisom}, it suffices to check if there exists a cover isomorphism between $\pi$ and $\pi'$. Fix a $\tilde v\in \mathcal{V}(H)$. By Lemma~\ref{lem:coverextendability}, it suffices to check if for some $\tilde v'\in {\pi'}^{-1}(\pi(v))$, the map $\tilde v\mapsto \tilde v'$ can be extended to a covering isomorphism from $H$ to $H'$. This can be done by extending the map locally edge by edge, and checking that at each stage it is an isomorphism onto its image. Looping through every such $\tilde{v'}$ gives a $\mathcal{O}(d\cdot\mathcal{E}(H))$ algorithm.
\end{proof}

% By means of Proposition~\ref{prop:algocovisom} and Theorem~\ref{thm:covisom} we can generated non-isomorphic graphs that are not WL distinguishable as follows:
\begin{algorithm}[tb]
\caption{Generate GraphCovers}
\label{alg:gendataset}
\begin{algorithmic}
   \STATE {\bfseries Inputs:} Graph $G$ as in Theorem~\ref{thm:covisom}, Degree $d>0$\\
   \STATE Initialize GraphCovers$(G, d)=\emptyset$.
   \FOR{every connected degree $d$ cover $\pi: H\rightarrow G$}
   \IF{$H\cong H'$ for some $H'\in$ GraphCovers$(G, d)$}
   \STATE{do nothing}
   \ELSE
   \STATE{GraphCovers$(G, d).$add($H$)}
   \ENDIF
   \ENDFOR
   \STATE {\bfseries return } GraphCovers$(G, d)$
   \end{algorithmic}
\end{algorithm}
% \begin{algorithm}\label{alg:gendataset}
% Let $G$ be a graph satisfying the condition of Theorem~\ref{thm:covisom}, and a degree $d>0$\\
% Let $S=\emptyset$\\
% ~\For{every cover $\pi:H\rightarrow G$ of degree $d$}{\eIf{$H$ is not connected or $H\cong H'$ for some $H'\in G$}{do nothing}{S.add(H)}}
% \end{algorithm}

To implement Algorithm~\ref{alg:gendataset}, we enumerate the degree $d$ covers of $G$ by selecting one permutation $\sigma_e \in S_d$ for each edge $e\in \mathcal{E}$, where $S_d$ is the group of permutations on $d$ elements. The cover $H$ is then constructed by creating $d$ copies $v_1, v_2, \ldots v_d$ for each $v \in \mathcal{V}$, and for each $e=vw\in \mathcal{E}$ we add edges connecting $v_{i}$ to $w_{\sigma_e(i)}$ for $i=1\ldots d$. This procedure is used in \cite{GROSS1977273} (see Theorem~$2$ in that paper), and in \cite{randomcovers2002}. To capture all isomorphism classes it is enough to fix $\sigma_e=id$ except for certain distinguished edges, where the number of distinguished edges is equal to $1-\chi(G)$. We use the algorithm described in Proposition~\ref{prop:algocovisom} to check if two generated covers are isomorphic.
%\textbf{TODO: Need to prove this? Could not find it in the literature. The proof depends on more complex topology definitions (the idea is that any  connected graph $G$ has a homotopy equivalence to the bouquet $B$ of $1 - \chi(G)$ circles, coevrs of $G$ are in bijection with covers of $B$, and $B$ has just one edge per cycle, so sampling there is more efficient)... Maybe it is better to leave this out.} \\

The limits of Algorithm~\ref{alg:gendataset} is that it generates only connected covers, and cannot generate a pair of graph with common cover that do not cover a common base graph (such as ones in Figure~\ref{fig:coveringexample}). Whereas the first point can be addressed, hence generalizing the construction of the CYCLES dataset from \cite{Horn22a}; the second cannot.
% Next we prove that Algorithm~\ref{alg:gendataset} can generate many graphs. However, not all graphs that fool WL can be generated since we only generated connected covers, and all generate graphs cover a common graph. We can generalize the procedure to also generate disconnected covers, but can't address the second point which makes us not capable of generating 
% As proved in the next Section, many graphs can be generated with the GraphCovers dataset.  generalizes the CYCLES dataset construction in \cite{Horn22a}. 

% \newpage
\section{A poly-factorial lower bound to the size of WL equivalence classes}
In the previous section, we gave an efficient algorithm that generates GraphCovers$(G, d)$, a collection of graphs that fool the WL test but that are pairwise non-isomorphic. However, we do not know how big this collection is.  In the case where $G$ is a length $n$ cycle, any degree $d$ covering of $G$ isomorphic to the cycle of degree $d\cdot n$, so our dataset would consist of only one graph, making it useless. This suggests that for GraphCovers$(G, d)$ to be useful, the topology of the graph $G$ has to be more complicated than that of a cycle. We thus show that if $G$ has two or more cycles, then GraphCovers$(G, d)$ grows factorially with $d$.

In Corollary~\ref{cor:nbcovers} we reduce the problem to the one of counting index $d$ subgroups in a rank $r$ free group, where $r = 1-\chi(G)$, and $\chi(G) = |\mathcal{V}| - |\mathcal{E}|$ is the Euler characteristic of the graph. This problem was solved in \cite{hall_1949} leading to the following recursive formula:

\begin{theorem}\cite{hall_1949}\label{thm:hall}
The number $N_{d, r}$ of subgroups of index $d$ in the rank $r$ free group $F_r$ is given recursively by $N_{1, r}=1$, 
\[N_{d, r} = d(d!)^{r-1} - \sum_{i=1}^{d-1}[(d-i)!]^{r-1}N_{i,r}\ .\]
\end{theorem}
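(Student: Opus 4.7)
The plan is to exploit the freeness of $F_r$ together with a standard orbit-decomposition on permutation representations, an approach originating with Hall. Define $T_{d,r}$ to be the number of homomorphisms $\phi:F_r\to S_d$ whose image acts transitively on $\{1,\ldots,d\}$. The first step is to establish the identity $T_{d,r}=(d-1)!\cdot N_{d,r}$: the map sending a transitive $\phi$ to the stabilizer $\phi^{-1}(\mathrm{Stab}_{S_d}(1))$ yields an index-$d$ subgroup of $F_r$, and conversely any index-$d$ subgroup $H$ together with a labeling $F_r/H\to\{1,\ldots,d\}$ sending $H$ to $1$ produces a transitive homomorphism via the coset action. Conjugation by elements of $\mathrm{Stab}_{S_d}(1)\cong S_{d-1}$ acts on the fiber over each $H$, and this action is free because any $\tau$ commuting with a transitive subgroup of $S_d$ and fixing $1$ must be the identity; hence each fiber has exactly $(d-1)!$ elements.

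The second step is to count $\mathrm{Hom}(F_r,S_d)$ in two ways. Since $F_r$ is free of rank $r$, a homomorphism is freely specified by the images of its generators, so $|\mathrm{Hom}(F_r,S_d)|=(d!)^r$. On the other hand, partitioning arbitrary homomorphisms by the orbit $O\ni 1$ of size $i$ under the image gives $\binom{d-1}{i-1}$ choices for $O$, then $T_{i,r}$ transitive homomorphisms $F_r\to\mathrm{Sym}(O)$, and finally $((d-i)!)^r$ unconstrained homomorphisms $F_r\to\mathrm{Sym}(\{1,\ldots,d\}\setminus O)$. Summing over $i$ yields
\[(d!)^r=\sum_{i=1}^{d}\binom{d-1}{i-1}\,T_{i,r}\,((d-i)!)^r.\]

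To conclude, I would substitute $T_{i,r}=(i-1)!\,N_{i,r}$, use the algebraic identity $\binom{d-1}{i-1}(i-1)!=(d-1)!/(d-i)!$, divide both sides by $(d-1)!$, and isolate the $i=d$ term (which contributes exactly $N_{d,r}$) to obtain the stated recursion; the base case $N_{1,r}=1$ is immediate since $F_r$ is the unique subgroup of index $1$ in itself. The main conceptual step is the first one, where one must verify carefully that the $S_{d-1}$-action on transitive homomorphisms sharing a prescribed stabilizer at $1$ is truly free; once that is in place, the remaining derivation is essentially bookkeeping with factorials and binomial coefficients.
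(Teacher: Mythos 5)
Your proof is correct, but note that the paper itself contains no proof of Theorem~\ref{thm:hall}: the result is imported directly from Hall's 1949 paper via a citation, so there is no internal argument to compare yours against. What you have written is essentially the classical proof (Hall's own, and the one reproduced in standard references on subgroup growth): the two-way count of $\mathrm{Hom}(F_r,S_d)$, whose cardinality $(d!)^r$ uses freeness of $F_r$; the correspondence between transitive permutation representations and index-$d$ subgroups with the fiber count $T_{d,r}=(d-1)!\,N_{d,r}$; and the decomposition of an arbitrary homomorphism according to the orbit of the point $1$. The details check out: semiregularity of the centralizer of a transitive subgroup gives freeness of the $S_{d-1}$-conjugation action on each fiber, and transitivity of that action --- which you leave implicit, but which follows at once from your labeling parametrization, since any two labelings sending $H\mapsto 1$ differ by composition with a permutation fixing $1$ --- is what upgrades ``each fiber has size a multiple of $(d-1)!$'' to ``exactly $(d-1)!$'', as needed. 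The concluding bookkeeping, namely $\binom{d-1}{i-1}(i-1)!=(d-1)!/(d-i)!$, division by $(d-1)!$, and isolating the $i=d$ term, does yield precisely the stated recursion, and the base case $N_{1,r}=1$ is immediate. In effect your proposal supplies the proof that the paper delegates to the literature.
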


The following is a well know fact from algebraic topology that we need, but do not prove in this paper. See Chapter $1$ in \cite{Hatcher:478079} for the definition of the fundamental group, and Proposition~$1$A.$2$ for a proof of the following.

\begin{proposition}\label{prop:fundgroup}
The fundamental group $\pi_1(G, p)$ based at $p\in \mathcal{V}(G)$ of a connected graph $G$ is isomorphic to the free group $F_r$ of rank $r=1-\chi(G)$.
\end{proposition}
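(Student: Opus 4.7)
The plan is to run the classical spanning-tree argument from algebraic topology. First I would view $G$ as a one-dimensional CW complex with $0$-cells at the vertices of $\mathcal{V}(G)$ and a $1$-cell attached along each unoriented edge, and then pick a spanning tree $T \subseteq G$, which exists since $G$ is connected. Because this is the standard setting in which $\pi_1$ of a graph is analysed (as in Chapter~1.A of \cite{Hatcher:478079}), I can freely apply CW-level tools.

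Next I would show that $T$ is contractible. For finite $T$ this is an induction on $|\mathcal{V}(T)|$: a finite tree with more than one vertex has a leaf, and removing the leaf together with its incident edge is a deformation retract. For infinite $T$ one can exhaust $T$ by an increasing sequence of finite subtrees and apply the same deformation retract argument in the colimit. Since $T$ is a contractible subcomplex of the CW complex $G$, the quotient map $q\colon G \to G/T$ is a homotopy equivalence and in particular induces an isomorphism $\pi_1(G,p) \cong \pi_1(G/T, q(p))$.

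Then I would identify $G/T$ with a wedge of circles $\bigvee_{e \notin T} S^{1}$, one circle for each edge of $G$ that does not lie in $T$. A spanning tree on $|\mathcal{V}(G)|$ vertices uses $|\mathcal{V}(G)| - 1$ (undirected) edges, so the number of non-tree edges equals $|\mathcal{E}_{\mathrm{und}}(G)| - (|\mathcal{V}(G)| - 1) = 1 - \chi(G) = r$. Finally, van Kampen's theorem (Proposition~1.26 in \cite{Hatcher:478079}) yields $\pi_1\!\bigl(\bigvee_{r} S^{1}\bigr) \cong F_{r}$, giving $\pi_1(G,p) \cong F_r$ as required.

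The main obstacle is the infinite-rank case, where one must be careful with both the contraction of $T$ and the application of van Kampen. A cleaner, purely combinatorial alternative avoiding these subtleties is to define $\pi_1(G,p)$ as reduced closed edge-walks based at $p$ modulo backtrack cancellation, and then prove directly that, writing $\gamma_w$ for the unique reduced $T$-path from $p$ to $w$, the loops $\ell_e := \gamma_u \cdot e \cdot \gamma_v^{-1}$ indexed by the non-tree edges $e = uv$ form a free generating set of this group; spanning edges correspond to the identity after reduction, and no nontrivial reduced word in the $\ell_e$ can reduce to a constant walk because each occurrence of $e^{\pm 1}$ survives the tree-path cancellations.
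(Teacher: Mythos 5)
The paper offers no proof of this proposition at all---it is explicitly stated as a ``well known fact from algebraic topology'' and deferred to Proposition~1A.2 of \cite{Hatcher:478079}---and your argument (collapse a spanning tree, identify the quotient with a wedge of $r = 1-\chi(G)$ circles, apply van Kampen) is exactly the standard proof given in that reference, so you have essentially reproduced the approach the paper relies on. Your proposal is correct; the only step stated loosely is the contractibility of an infinite tree via a ``colimit'' of finite deformation retractions, which is unproblematic here since the paper's graphs are countable and of bounded degree, so the usual device of performing the $n$-th finite retraction on a dyadic time interval makes it rigorous.
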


The exact number of connected degree $d$ covers was computed in \cite{ctx1949787780005516}, but for our purpose we simply need a good lower bound. We combine Theorem~\ref{thm:hall} and Proposition~\ref{prop:fundgroup} to obtain the following lower bound on the number of non-isomorphic degree $d$ connected covers of a connected base graph $G$, and therefore a lower bound on the size of GraphCovers($G$, $d$). The proof can be found in Appendix~\ref{app:bound}.
\newpage
\begin{corollary}\label{cor:nbcovers}
The number $C_{d, r}$ of non-isomorphic connected degree $d$ covers of a connected graph $G$ of Euler characteristic $0>\chi(G) = 1 - r$ satisfies \[
    d\cdot C_{d, r} \geq N_{d, r}
\]
where $N_{d, r}$ is the number of index $d$ subgroups of the free group of rank $r$. In particular, 
\[C_{d, r}\geq d^{r-2}((d-1)!)^{r-1}\  .\]
\end{corollary}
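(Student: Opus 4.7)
The plan is to invoke the classical Galois correspondence from covering space theory to reduce the geometric count of covers to an algebraic count of subgroups, then combine this with Hall's recursion (Theorem~\ref{thm:hall}) to extract the explicit lower bound. By Proposition~\ref{prop:fundgroup}, $\pi_1(G,p)\cong F_r$ with $r=1-\chi(G)$. Standard covering theory (Chapter~1.3 of Hatcher) gives a bijection between isomorphism classes of pointed connected degree-$d$ covers of $G$ and index-$d$ subgroups of $F_r$; two pointed covers correspond to the same unpointed cover exactly when their subgroups are conjugate. So $C_{d,r}$ counts conjugacy classes of index-$d$ subgroups of $F_r$.

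Next I would prove the first inequality $d\cdot C_{d,r}\geq N_{d,r}$ by bounding the size of a conjugacy class. For any subgroup $K\leq F_r$ of index $d$, the conjugates of $K$ are in bijection with $F_r/N_{F_r}(K)$, and since $K\leq N_{F_r}(K)\leq F_r$, the orbit has size $[F_r:N_{F_r}(K)]\leq [F_r:K]=d$. Summing over conjugacy classes gives $N_{d,r}\leq d\cdot C_{d,r}$.

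For the explicit bound, I would show $N_{d,r}\geq (d!)^{r-1}$ by a short inductive estimate on Hall's recursion, since $(d!)^{r-1}/d=d^{r-2}((d-1)!)^{r-1}$. First observe the trivial upper bound $N_{i,r}\leq i(i!)^{r-1}$ obtained directly from Theorem~\ref{thm:hall} by dropping the subtracted non-negative terms. Plugging this into the recursion yields
\[
N_{d,r}\;\geq\; d(d!)^{r-1} \;-\; \sum_{i=1}^{d-1} i\bigl(i!\,(d-i)!\bigr)^{r-1}.
\]
The key combinatorial input is $i!(d-i)!\leq (d-1)!$ for $1\leq i\leq d-1$, which is equivalent to $\binom{d}{i}\geq d$ and is easily checked (equality at the endpoints, strict inequality in the middle). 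Using this, the sum is at most $((d-1)!)^{r-1}\cdot d(d-1)/2$, and factoring out $((d-1)!)^{r-1}$ gives
\[
N_{d,r}\;\geq\; ((d-1)!)^{r-1}\!\left(d^{r}-\tfrac{d(d-1)}{2}\right).
\]
The desired inequality $N_{d,r}\geq d^{r-1}((d-1)!)^{r-1}$ then reduces to $d^{r-1}\geq d/2$, which holds whenever $r\geq 2$ and $d\geq 1$. Dividing by $d$ using the first inequality yields the stated lower bound on $C_{d,r}$.

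The main obstacle is the explicit bound step: Hall's recursion alternates signs, so a naive induction on $N_{d,r}$ alone fails, and one must carry along the matching upper bound $N_{i,r}\leq i(i!)^{r-1}$ to control the subtracted terms. The inequality $i!(d-i)!\leq (d-1)!$ is what makes the algebra collapse cleanly into the target form $(d!)^{r-1}$. The covering-to-subgroup correspondence and the normalizer argument are standard and require no new input beyond Proposition~\ref{prop:fundgroup}.
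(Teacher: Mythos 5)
Your proposal is correct and follows essentially the same route as the paper: both reduce the count to index-$d$ subgroups via the covering-space correspondence with a factor-$d$ loss (your normalizer/conjugacy-class bound is the algebraic face of the paper's ``at most $d$ choices of basepoint in the fiber'' argument), and both lower-bound Hall's recursion by plugging in the trivial upper bound $N_{i,r}\leq i(i!)^{r-1}$ to get $N_{d,r}\geq (d!)^{r-1}$. The only difference is the final combinatorial estimate, where your summed inequality $i!\,(d-i)!\leq (d-1)!$ (i.e.\ $\binom{d}{i}\geq d$) is a clean and valid alternative to the paper's termwise bound $\frac{d!}{(d-i)!}\geq 2^{i}\, i!\geq i\cdot i!$; both close the argument correctly for $r\geq 2$.
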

% This conjecture seems to follow from \cite{hall_1949} Theorem~$5.2$. In fact, \cite{hall_1949} Theorem~$5.2$ says that the number of number of based coverings of degree $d$ of the bouquet of $r$ circles satisfies \[N_{d, r} = d(d!)^{r-1} - \sum_{i=1}^{d-1}[(d-i)!]^{r-1}N_{i,r}.\]
% \newpage
\section{Experiments}

We now design experiments to benchmark our dataset. First, we take three connected non-isomorphic degree $5$ covers of a connected base graph $G$ with $|\mathcal{V}(G)|=9$ and $\chi(G)=-1$ that satisfy the conditions of Theorem~\ref{thm:covisom}, see Figure~\ref{fig:basegraph} in Appendix~\ref{app:graphs} for an illustration of the chosen $G$. All three resulting connected graphs have $45$ nodes and $50$ edges. The choices of the $3$ graphs and the degree $5$ was arbitrary, and repeating the experiment with different choices always gave the same results. We can now benchmark the dataset by measuring the performance of popular GNN architectures on the supervised graph classification task.
\begin{table}[t]
\caption{Training accuracy of $2$-layer deep nonlinear GNNs on $3$ non-isomorphic, degree $5$ covers of a graph with $\chi(G)=-1$.}
\label{tab:exp2layer}
\vskip -0.3in
\begin{center}
\begin{small}
\begin{sc}
\begin{tabular}{lcccr}
\toprule
MODEL & Unique id & Random & Degree & Constant \\
\midrule
WL & N/A & N/A & $33\%$ & $33\%$ \\
GCN & $100\%$ & $100\%$ & $33\%$ & $33\%$\\
CHEB & $100\%$ & $100\%$ & $33\%$ & $33\%$\\
GIN & $100\%$ & $100\%$ & $33\%$ & $33\%$\\
SAGE & $100\%$ & $100\%$ & $33\%$ & $33\%$\\
GAT & $100\%$ & $100\%$ & $33\%$ & $33\%$\\
\bottomrule
\end{tabular}
\end{sc}
\end{small}
\end{center}
\vskip -0.2in
\end{table}

We consider five GNN architectures, each differing only by their message passing scheme. They consist of two message passing layers with ReLU nonlinearity, followed by global poolings (stack of mean and max pooling) and an MLP with output dimension $3$. The five message passing schemes are ChebConv \cite{deferrard}, GCN \cite{GCN}, GraphSAGE \cite{SAGE}, GAT \cite{GAT}, and GIN \cite{GIN}.
 
We consider four different node features $\mathbf{x}:\mathcal{V}\rightarrow \mathbb{R}^c$. The constant signal equal to $1$ everywhere, the degree signal mapping $v$ to its degree, a random signal sampled from a $1$-dimensional Gaussian $\mathcal{N}(0, 1)$ distribution, and a unique identifier signal where each node is mapped to a $45$-dimensional binary vector of all $0$s except $1$ at a position uniquely determined by the node.

To make sure that the GNNs have the capacity to distinguish these graphs, the hyperparameters are chosen such that all architectures reach $100\%$ training accuracy on the random and unique ID signals. The existence of such a choice of hyperparameters is expected \cite{Loukas2020What, NEURIPS2020_23685a24, NEURIPS2020_75877cb7, 10.5555/3524938.3525258}. The chosen hyperparameters are: cross entropy loss; Adam optimizer with learning rate $0.01$ and weight decay $5\cdot10^{-4}$; hidden dimension $100$; and each architecture is trained for $400$ epochs on the $3$ graphs. We report training accuracy in Table~\ref{tab:exp2layer}, to which we have also added the accuracy of the WL test on these graphs.

The architectures all perform poorly, as they learn only one of the three isomorphism classes. In fact, all the GNNs give constant predictions, since every graph's embedding in the final layer is the same, as expected by Theorem~\ref{thm:main}. 
% First, we generate the $97$ non-isomorphic degree $5$ covers of a connected base graph $G$ with $\chi(G)=-1$ that satisfy the conditions of Theorem~\ref{thm:covisom}. The original signal on the graph is taken to be the constant signal which is $1$ everywhere. Each graph in the dataset corresponds to an isomorphism class, so each graph has a unique target label. We can now benchmark the dataset by measuring performance of common architectures on the supervised graph classification class, where we want to guess the correct isomorphism class between the $97$ possible labels.
% We consider four simple graph convolution architectures, each differing only by their message passing scheme.
% Each architecture consists of two nonlinear message passing layers with ReLu nonlinearity,  and followed by a global pooling and an MLP with output dimension $97$. The four message passing schemes we picked are GCN \cite{GCN}, GIN \cite{GIN}, GraphSAGE \cite{SAGE}, and GAT \cite{GAT}. Each architecture is trained for $100$ epochs with cross entropy loss and Adam optimizer with fixed learning rate and weight decay. The hidden dimension is also fixed at $16$. We train the architectures on the whole dataset, and report the training accuracy. The results can be found in Table~\ref{tab:exp2layer}. The architectures all perform poorly. In fact, all the GNNs give constant predictions, since every graph's embedding in the final layer is the same as expected from Theorem~\ref{thm:main}. \\

\section{Conclusion}
After giving a novel description of WL equivalence classes to the GNN community, we materialized the result to provide an arbitrarily large dataset on which message passing GNNs cannot do graph classification.

Our work strengthens the existing link between Algebraic Topology, Group Theory, and Geometric Deep Learning. We showed that GNNs capture two orthogonal structures of graphs: size (number of nodes) and shape (isomorphism type of covers). We also showed that messages propagate equivalently on a graph and its (hyperbolic) universal cover, which might lead to a novel viewpoint on bottlenecks.
%Our work shows the dichotomy between the size, and the shape of graphs, and suggests that 
%We demonstrated the importance of the dichotomy between size (geometry) and shape (topology) 

Future work includes benchmarking more complex architectures on GraphCovers, and studying the performance of higher order WL tests on GraphCovers.

GraphCovers can now be used to benchmark and study novel GNN architectures. We consider the contribution of a concrete dataset for which message passing does not work to be an essential step in improving GNNs expressiveness, and to go beyond message passing.
%For years, researchers have claimed that the future of GNNs lie beyond message passing. We consider the contribution of a concrete dataset for which message passing does not work to be an essential step in that direction. 
We thus conclude by challenging the community to solve graph classification on the GraphCovers dataset for large degree covers.

\section*{Software and Data}
Code to generate GraphCovers$(G, d)$ and to reproduce our experiments can be found on our github repository \footnote{https://github.com/jacobbamberger/GraphCovers}. Experiments were mainly done using pytorch geometric \cite{pytorchgeometric} and Networkx \cite{networkx}.
% All experiments involving GNNs were done using pytorch geometric \cite{pytorchgeometric}. For running WL and implementing Algorithm~\ref{alg:gendataset} we used the Networkx library \cite{networkx}.

\section*{Acknowledgements}
I would like to thank the members of LTS$2$ for the stimulating environment, and particularly to Pierre Vandergheynst for the wonderful supervision and generous support. I am grateful to the reviewers for several helpful comments.
% This work was supported and supervised by Pierre Vandergheynst, whom I thank for the insightful discussions and freedom to explore this topic. I would also like to thank the other members of LTS$2$ for encouraging and insightful discussions.

\newpage

\bibliography{example_paper}
\bibliographystyle{icml2022}

%%%%%%%%%%%%%%%%%%%%%%%%%%%%%%%%%%%%%%%%%%%%%%%%%%%%%%%%%%%%%%%%%%%%%%%%%%%%%%%
%%%%%%%%%%%%%%%%%%%%%%%%%%%%%%%%%%%%%%%%%%%%%%%%%%%%%%%%%%%%%%%%%%%%%%%%%%%%%%%
% APPENDIX
%%%%%%%%%%%%%%%%%%%%%%%%%%%%%%%%%%%%%%%%%%%%%%%%%%%%%%%%%%%%%%%%%%%%%%%%%%%%%%%
%%%%%%%%%%%%%%%%%%%%%%%%%%%%%%%%%%%%%%%%%%%%%%%%%%%%%%%%%%%%%%%%%%%%%%%%%%%%%%%
\newpage
\appendix
\onecolumn

\section{WL test solves isomorphisms for Trees}\label{app:isotrees}

\begin{replemma}{lem:trees}
Let $T$ and $T'$ be (possibly infinite) trees with a stable coloring, then $T$ and $T'$ are indistinguishable by WL if and only if they are isomorphic.\end{replemma}

\begin{proof}[Proof of Lemma~\ref{lem:trees}]
Consider two trees $T$ and $T'$ with same color refinement and with bounded degree as defined in Remark~\ref{rem:finiteness}.
We build the isomorphism inductively, starting at a node $v\in \mathcal{V}(T)$ and defining it on a connected subtree by adding a neighbor to the previous subtree. The order in which we visit the neighbors is that of a breadth first search (BFS) algorithm, ensuring that all neighbors of a given vertex are added before moving to neighbors of a new vertex. By the local finiteness assumption, this ensures local surjectivity. At each step we ensure that it is an isomorphism onto its image, and that it keeps the coloring invariant. In the base case we can send $v$ to a $v'\in\mathcal{V}(T')$ of the same color. In the inductive step, assume we built a map on a connected proper subtree $T_k$ of size $k$, that maps isomorphically onto its image $T_k'$ and keeps color invariant. Consider a vertex $v_{k+1}$ with a neighbor $u\in \mathcal{V}(T_k)$. Since the color refinement of the two trees are the same and $\phi_k$ is an isomorphism between $T_k$ and $T_k'$ that respects coloring, $\mathcal{N}_{T_k}(u)$ and $\mathcal{N}_{T_k'}(\phi_k(u))$ have the same number of nodes of every color, so at least one neighbor $v_{k+1}' \in \mathcal{N}_{T_k'}(\phi_k(u))$ has to have the same color as $v_{k+1}$ and not be in $T_k'$.
% there is $v_{k+1}\in \mathcal{N}_{T_k'}(\phi(u))-\mathcal{V}(T_k')$
% at least one neighbor of $v_{k+1}' \in \mathcal{N}_{T_k'}(\phi(u))$ has to have the same color as $v_{k+1}$ and not be in $T_k'$.
We can thus extend $\phi_k$ by defining $\phi_{k+1}(v_{k+1}):=v_{k+1}'$. Since all we did was to grow a branch on $T_k$ starting at $v_k$ and growing a branch on $T_{k+1}'$ starting at $\phi(v_k)$ of same color, the resulting map is an isomorphism between $T_{k+1}$ and $T_{k+1}'$ that keeps coloring invariant. Moreover, local finiteness and the BFS order in which we chose neighbors ensure that all neighbors in $T$ of $v\in \mathcal{V}(T_k)$ will be added to the map in a finite number of steps, and that any $v'\in \mathcal{V}(T')$ is in the image of $\phi_k$ for some finite $k$. Therefore $\bigcup_{k\in \mathbb{N}} T_k = T$ and $\bigcup_{k\in \mathbb{N}} T'_k = T'$ and the resulting map is an isomorphism.
%if $v'\in T'_k$ satisfies $\mathcal{N}_{T'_k}()$
% We need to show that the obtained map is still an isomorphism on subtrees, but this has to be true since all we did was to grow a branch on the domain and the image.
% We build the isomorphism inductively. Take two trees with same color refinement, take two leaves of the same color send one to the other in the mapping between the trees. Their neighbor necessarily has the same color, otherwise they would not have the same color. Suppose we have mapped a connected subtree by following that procedure
% We show this by induction on the number of vertices. The base case is trivial as there is a unique graph with one vertex.
% Suppose now that the statement is true for any tree with $n-1$ vertices. Let $T$ and $T'$ be two trees on $n$ vertices that produce the same WL labelling $\ell : V_T \rightarrow \mathbb{N}$ and $\ell ' : V_{T'} \rightarrow \mathbb{N}$. Here same means that each graph has the same number of vertices of each color, i.e. for each $i \in \mathbb{N}$, $|\ell^{-1}(i)| = |\ell^{-1}(i)|$. Now consider two leaves $v\in T, v'\in T'$ that are identically colored (which exist if we are not in the base case). 
\end{proof}

\section{The disconnected case}\label{app:maincor}

We first state and prove a Lemma that we need for the disconnected case:

\begin{lemma}\label{lem:disjoint}
Stable colorings of two finite connected graphs are either equal or disjoint.
\end{lemma}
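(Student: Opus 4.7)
The plan is to interpret the stable colorings of $G$ and $G'$ as the restrictions $c|_{V(G)}$ and $c|_{V(G')}$ of the stable coloring $c$ of the disjoint union $G \oplus G'$ (consistent with the definition of the WL test in the paper). Writing $K_G := c(V(G))$ and $K_{G'} := c(V(G'))$, the goal becomes: either $K_G \cap K_{G'} = \emptyset$ or $K_G = K_{G'}$.

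The key observation is that stability of $c$ means exactly that every vertex of color $k$ has the same multiset of neighbor colors $M(k)$: if two vertices of color $k$ had different neighbor multisets, one more refinement step would split them, contradicting stability. This holds globally on $G \oplus G'$, so whenever $v \in V(G)$ and $v' \in V(G')$ share a color $k$, the multiset of colors on $\mathcal{N}_G(v)$ equals $M(k)$ and also equals the multiset of colors on $\mathcal{N}_{G'}(v')$. In particular, every color present at a neighbor of $v$ is present at a neighbor of $v'$, and vice versa.

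Assuming $K_G \cap K_{G'} \neq \emptyset$, I would pick some common $k$, witnesses $v \in V(G)$ and $v' \in V(G')$ with $c(v) = c(v') = k$, and prove $K_G \subseteq K_{G'}$ (symmetry then gives equality). Given any $k' \in K_G$, choose $w \in V(G)$ with $c(w) = k'$; by connectedness of $G$, pick a path $v = v_0, v_1, \ldots, v_n = w$. I would then lift this path to $G'$ inductively: set $v'_0 := v'$, and assuming $v'_i \in V(G')$ satisfies $c(v'_i) = c(v_i)$, use the shared-multiset property at color $c(v_i)$ to extract a neighbor $v'_{i+1} \in \mathcal{N}_{G'}(v'_i)$ with $c(v'_{i+1}) = c(v_{i+1})$. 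After $n$ steps I obtain $v'_n \in V(G')$ with $c(v'_n) = k'$, so $k' \in K_{G'}$.

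The main obstacle is really just keeping the logical order straight: one must use the multiset (not set) equality of neighbor colors so that the lifted walk can always be extended at each step, and one must invoke stability on the disjoint union to guarantee the shared multiset across the two components. Finiteness of the graphs ensures that color refinement actually reaches a stable coloring, and connectedness supplies the path whose image witnesses $K_G \subseteq K_{G'}$; no deeper ingredient is needed.
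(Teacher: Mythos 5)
Your proof is correct and takes essentially the same approach as the paper: both arguments rest on the fact that, at stability, injectivity of $HASH$ forces vertices of equal color to have matching neighbor-color multisets, and then use connectedness to propagate this along paths. The only difference is cosmetic --- you prove the contrapositive (a shared color forces equal color sets, via path lifting into $G'$), whereas the paper shows that a color appearing in only one graph forces its neighbors' colors, and hence by connectedness all colors of that graph, to be absent from the other.
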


\begin{proof}
Let $G$ and $G'$ be finite connected graphs with differing stable colorings. Since they are finite graphs, there must be one color which is only in one of the two colorings. Let $v\in\mathcal{V}(G)$ be a node of that color. Since $v$'s color does not appear in $G'$, then by injectivity of the updating function $HASH$, $v$'s neighbors' colors also do not appear in $G'$. Path connectedness of $G$ allows us to conclude.
\end{proof}

We can now prove the main theorem for disconnected graphs:
\begin{reptheorem}{cor:disconnected}
Two graphs $G$ and $G'$ of bounded degree are indistinguishable by the WL test if and only if they admit decompositions into disjoint unions of connected components:
    \[ G = \bigoplus_{i\in \mathcal{I}} \bigoplus_{j\in\mathcal{J}_i} G_{ij} \text{ and } G' = \bigoplus_{i\in\mathcal{I}}\bigoplus_{j'\in\mathcal{J}_{i}'} G'_{ij'} \ ,\] such that the following holds
    \begin{itemize}
    \vspace{-0.15in}
        \item for every $i\in\mathcal{I}$,  $|\bigoplus_{j\in\mathcal{J}_i} G_{ij}| =  |\bigoplus_{j'\in\mathcal{J}'_{i}} G'_{ij'}|$
        \vspace{-0.075in}
        \item for every $i\in\mathcal{I}$, there is a graph $H_i$ that covers both $G_{ij}$ and $G'_{ij'}$ for all $j\in\mathcal{J}_i$ and $j'\in\mathcal{J}'_i$.
        % for every $j\in\mathcal{J}_i$ and $j'\in\mathcal{J}_i'$ the components $G_{ij}$ and $G_{ij'}$ admit a common connected cover, and if $i_1\neq i_2$ then $G_{ij}$ and $G_{ij'}$ do admit a common cover.
    \end{itemize}
    
    Moreover, if $|\bigoplus_{j\in\mathcal{J}_i} G_{ij}|<\infty$, the cover $H_i$ can be chosen to be finite.
\end{reptheorem}

\begin{proof}[Proof of Theorem~\ref{cor:disconnected}]
Let $G$ and $G'$ be graphs that are indistinguishable by the WL test. Let $\Gamma$ be a connected component of $G$. By Lemma~\ref{lem:disjoint}, there must be a connected component $\Gamma'$ in $G'$ with the same coloring as $\Gamma$. Repeating the argument for each connected component of $G$ gives a bijection between connected components of $G$ and connected components of $G'$. Applying Lemma~\ref{lem:disjoint} to the components of $G$ allows us to organize the components in groups of colorings, where all components having the same coloring are indexed by some $i\in \mathcal{I}$. We thus obtain decompositions: 
\[ G = \bigoplus_{i\in\mathcal{I}}\bigoplus_{j\in\mathcal{J}_i}G_{ij} \text{ and } G' = \bigoplus_{i\in\mathcal{I}}\bigoplus_{j'\in\mathcal{J}'_i}G_{ij'}. \]
Moreover, for fixed $i, \ i' \in\mathcal{I}$, every $G_{ij}$ and every $G_{i'j'}$ have the same coloring if and only if $i=i'$, so by Theorem~\ref{thm:main} they admit a common cover if and only if $i=i'$. To see that for every $i\in\mathcal{I}$,  $|\bigoplus_{j\in\mathcal{J}_i} G_{ij}| =  |\bigoplus_{j'\in\mathcal{J'}_{i}} G_{ij'}|$, it suffices to look at the first time step of the color refinement on both collections, since by Lemma~\ref{lem:disjoint} and our choice of indexing by colorings, different $i$s have disjoint colorings. Therefore $\bigoplus_{j\in\mathcal{J}_i} G_{ij}$ and $\bigoplus_{j'\in\mathcal{J'}_{i}} G_{ij'}$ are indistinguishable by WL, and they must have the same coloring at every step of the color refinement, in particular they must have the same coloring at step $0$, but this means that they have the same .

For the converse, we assume that $G$ and $G'$ admit such decompositions. It suffices to show that for every $i\in\mathcal{I}$, the graphs  $\bigoplus_{j\in\mathcal{J}_{i}} G_{ij}$ and $\bigoplus_{j'\in\mathcal{J'}_{i}} G_{ij'}$ are indistinguishable by WL. For two $j_1, \ j_2 \in \mathcal{J}_i$, since $G_{ij_1}$ and $G_{ij_2}$ both have a common cover with $G'_{ij'}$ for any $j'\in\mathcal{J}_i'$, they must have a common cover, and theorefore the same coloring by Theorem~\ref{thm:main}. Hence there is a graph $H_i$ that covers all $G_{ij}$s for $j\in\mathcal{J}$ and also all $G_{ij'}$s for $j'\in\mathcal{J}'$, we denote the degrees of these covers by $d_j$ and $d'_{j'}$ respectively. In particular, the number of nodes of a specific color at any step of the color refinement of $H_i$ is precisely $\sum_{j\in\mathcal{J}_i}d_j$ times the number of nodes of that color in $\bigoplus_{j\in\mathcal{J}_{i}} G_{ij}$, but also   $\sum_{j'\in\mathcal{J'}_i}d'_{j'}$ times the number of nodes of that color in $\bigoplus_{j'\in\mathcal{J'}_{i}} G_{ij'}$. However, since the  of both graphs are equal, we get that the colorings at step $0$ are the same, and therefore $\sum_{j\in\mathcal{J}_i}d_j = \sum_{j'\in\mathcal{J'}_i}d'_{j'}$. We thus conclude that both graphs are indistinguishable by WL using Theorem~\ref{thm:main}.
\end{proof}

% You can have as much text here as you want. The main body must be at most $8$ pages long.
% For the final version, one more page can be added.
% If you want, you can use an appendix like this one, even using the one-column format.
%%%%%%%%%%%%%%%%%%%%%%%%%%%%%%%%%%%%%%%%%%%%%%%%%%%%%%%%%%%%%%%%%%%%%%%%%%%%%%%
%%%%%%%%%%%%%%%%%%%%%%%%%%%%%%%%%%%%%%%%%%%%%%%%%%%%%%%%%%%%%%%%%%%%%%%%%%%%%%%

\section{Lower bound on the size of WL equivalence classes}\label{app:bound}

\begin{repcorollary}{cor:nbcovers}
The number $C_{d, r}$ of non-isomorphic degree $d$ covers of a graph $G$ of Euler characteristic $0>\chi(G) = 1 - r$ satisfies \[
    d\cdot C_{d, r} \geq N_{d, r}
\]
where $N_{d, r}$ is the number of index $d$ subgroups of the free group of rank $r$. In particular: 
\[C_{d, r}\geq d^{r-2}((d-1)!)^{r-1} .\]
\end{repcorollary}

\begin{proof}[Proof of Corollary~\ref{cor:nbcovers}]
Let $r=1-\chi(G)$, and pick a base-point $p\in \mathcal{V}(G)$ allowing us to identify $\pi_1(G, p)$ to the rank $r$ free group $F_r$. Index $d$ subgroups of $F_r$ are in bijection with isomorphism classes of degree $d$ based covering maps of $G$ (Proposition $1.32$ and Theorem~$1.38.$ in \cite{Hatcher:478079}), where a based covering of $G$ is a covering of $G$ together with a choice of basepoint $\tilde p \in \pi^{-1}(p)$. Moreover, every cover corresponds to at most $d$ non-isomorphic based covering maps differing only by the choice of base-point $\tilde p \in \pi^{-1}(p)$. This shows that $d\cdot C_{d,r}\geq N_{d, r}$.\\
To see that $C_{d, r}$ grows factorially we show that $N_{d, r}$ does when $r\geq 2$.  We show that $N_{d, r}\geq (d!)^{r-1}$:
\begin{align*}
    % N_{d, r} & = d(d!)^{r-1} - \sum_{i=1}^{d-1}[(d-i)!]^{r-1}N_{i,r} \\
    N_{d, r} & = (d!)^{r-1} + \sum_{i=1}^{d-1}(d!)^{r-1} - [(d-i)!]^{r-1}N_{i,r}\\
    & \geq (d!)^{r-1} + \sum_{i=1}^{d-1}(d!)^{r-1} - [(d-i)!]^{r-1}i(i!)^{r-1}\\
    & \geq (d!)^{r-1} \ .
\end{align*}
Where the first inequality follows from $N_{i, r}\leq i(i!)^{r-1}$, and in the last line we use that for any $i$, 
\[(d!)^{r-1} - i[(d-i)!]^{r-1}(i!)^{r-1}\geq 0.\] To show this, we may assume WLOG that $i\leq\frac{d}{2}$ (otherwise repeat the argument with $d-i$ instead of $i$). From which we have 
\begin{align*}
    \frac{d!}{(d-i)!} &= (\frac{d}{2}\cdot2)(\frac{d-1}{2}\cdot2)\ldots (\frac{d-i+1}{2}\cdot2) \\
    &\geq 2^{i} \cdot i! \geq i \cdot (i!) \ , 
\end{align*}
where the first inequality follows from the assumption $\frac{d-k}{2}\geq \frac{d}{2}-k\geq i-k$.
The result follows from raising both sides to the power $r-1$.
% To see that $C_{d, r}$ grows factorially we show that $N_{d, r}$ does when $r\geq 2$. We show by induction that $N_{d, r}\geq d!$. The base case $d=1$ is clear. Supposed that $N_{d, r}\geq (d!)^{r-1}$ for all $d\leq n$. We get 
% \begin{align*}
%     N_{d, r} & = d(d!)^{r-1} - \sum_{i=1}^{d-1}[(d-i)!]^{r-1}N_{i,r} \\
%     & = (d!)^{r-1} + \sum_{i=1}^{d-1}(d!)^{r-1} - [(d-i)!]^{r-1}N_{i,r}\\
%     & \geq (d!)^{r-1} + \sum_{i=1}^{d-1} (d!)^{r-1} - [(d-i)!]^{r-1}i! \ \ \ \textit{by the induction hypothesis}\\
%     & \geq (d!)^{r-1} 
% \end{align*}
% by showing that$\frac{N_{d+1, r}}{N_{d, r}}>1$ for $r\geq 2$:
% \begin{align*}
%     \frac{N_{d+1, r}}{N_{d, r}} &= \frac{(d+1)[(d+1)!]^{r-1} - \sum_{i=1}^{d}{[(d-i)!]^{r-1}N_{d-i, r}}}{N_{d, r}}\\
%     &= \frac{(d+1)[(d+1)!]^{r-1} - \sum_{i=1}^{d-1}{[(d-i)!]^{r-1}N_{d-i, r}} }{N_{d, r}} - 1 \\
%     & = \frac{\frac{(d+1)^{r}}{d}*d*(d!)^{r-1} - B}{d*(d!)^{r-1} - B} - 1 \ \ \ \ \ \ \text{, where } B =  \sum_{i=1}^{d-1}{[(d-i)!]^{r-1}N_{d-i, r}}\\
%     & = \frac{(d+1)^r}{d} + (\frac{(d+1)^r}{d}-1)*\frac{B}{d*(d!)^{r-1}-B}-1 \\
%     & \geq (d+1)^{r-1} - 1 \ \ \ \text{, since the second term was positive }\\
%     & > 1
% \end{align*}
% Moreover, there is a $d$-to-one map from covering maps to based covering maps, where the $d$ different maps correspond to the choice of basepoint in the fiber of $p$. Therefore, the number of degree $d$ covers of $G$ is equal to the number of index $d$ subgroups of $F_r$ divided by $d$, the result then follows from Theorem~\ref{thm:hall}.
\end{proof}

% \section{Equivalence between Algorithm~\ref{} and the usual formulation of WL test}

\newpage
\section{Examples of covers}\label{app:graphs}
In this Section we illustrate examples of covering spaces. We fix a base graph $G$ in Figure~\ref{fig:basegraph}. It is obtained from the bottom left graph in Figure~\ref{fig:coveringexample} by adding $3$ nodes to ensure that the conditions of  Theorem~\ref{thm:covisom} are satisfied. In Figures~\ref{fig:deg2covers} and \ref{fig:deg3covers} we illustrate GraphCovers$(G, 2)$ and GraphCovers$(G, 3)$ respectively. All colours are stable and obtained using a fixed $HASH$.

\begin{figure*}[ht]
    \begin{subfigure}
        \centering
        \includegraphics[width=0.22\textwidth]{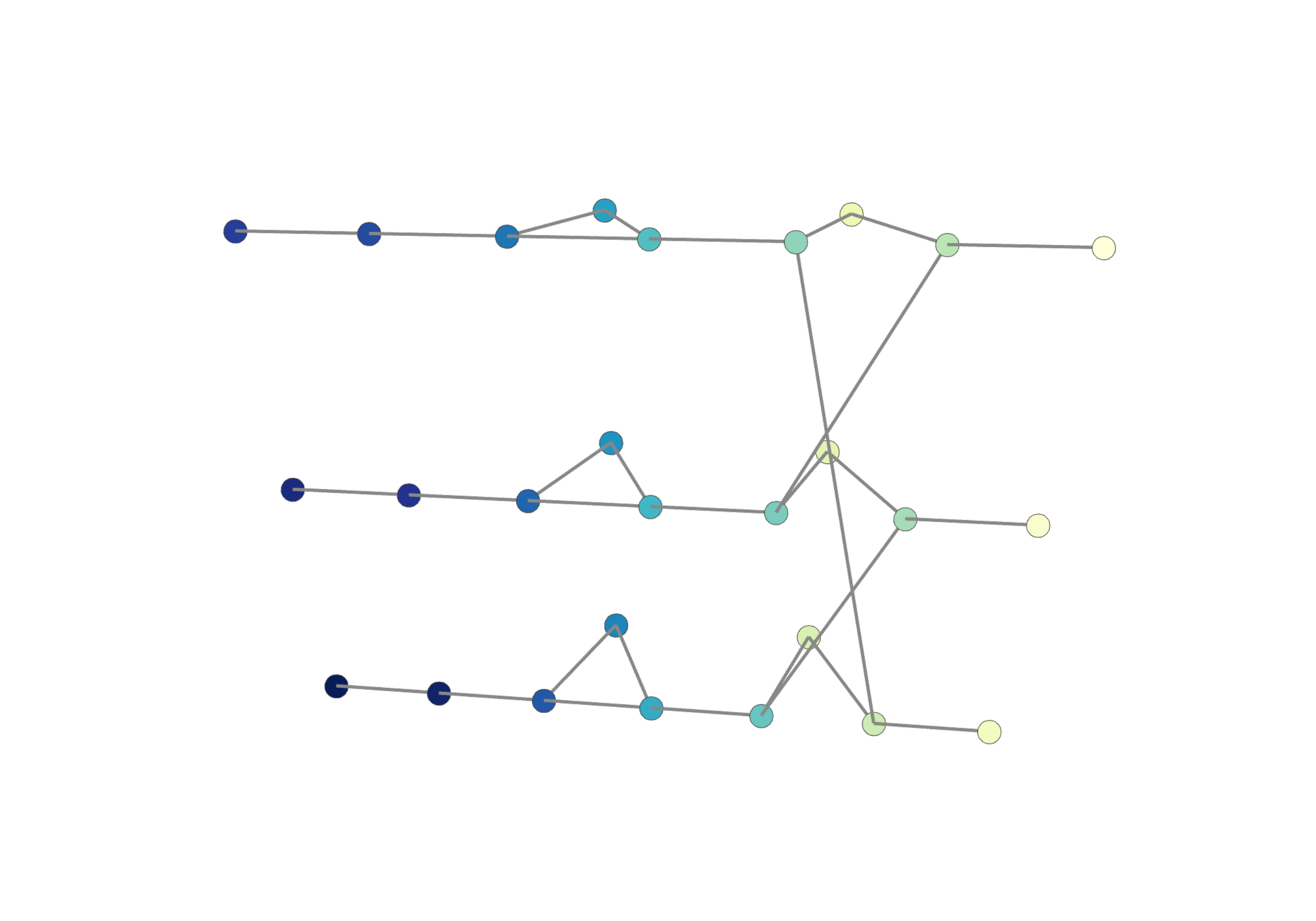}
        % \caption{Disconnected network on $2$ neurons}
        % \label{fig:2discon}
        \end{subfigure}
        \hfill
    \begin{subfigure}
        \centering
        \includegraphics[width=0.22\textwidth]{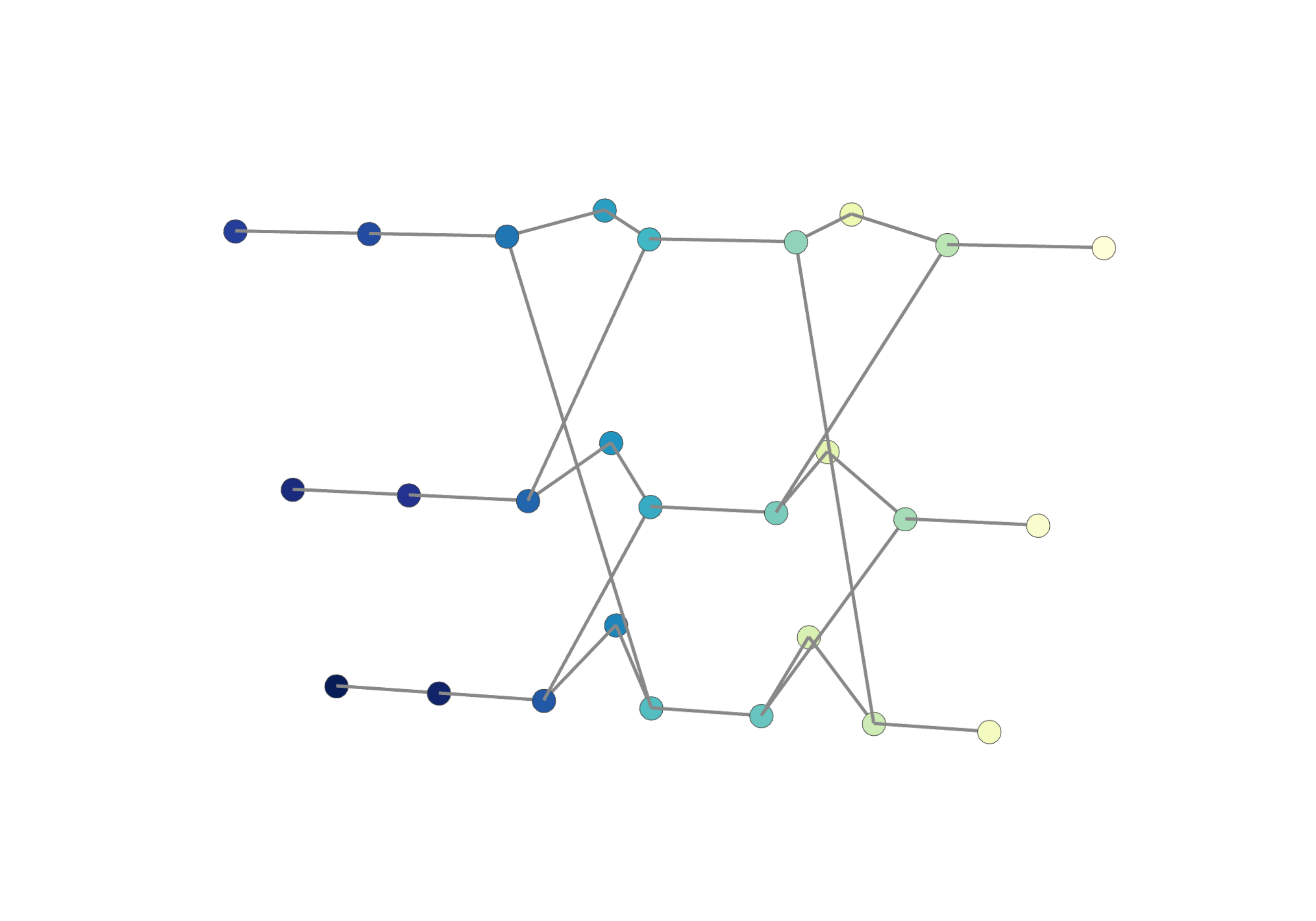}
        % \caption[$1$-simplex]%
        % {$1$-simplex network}
        % \label{fig:1simplex}
    \end{subfigure}
    \hfill
    \begin{subfigure} 
        \centering
        \includegraphics[width=0.22\textwidth]{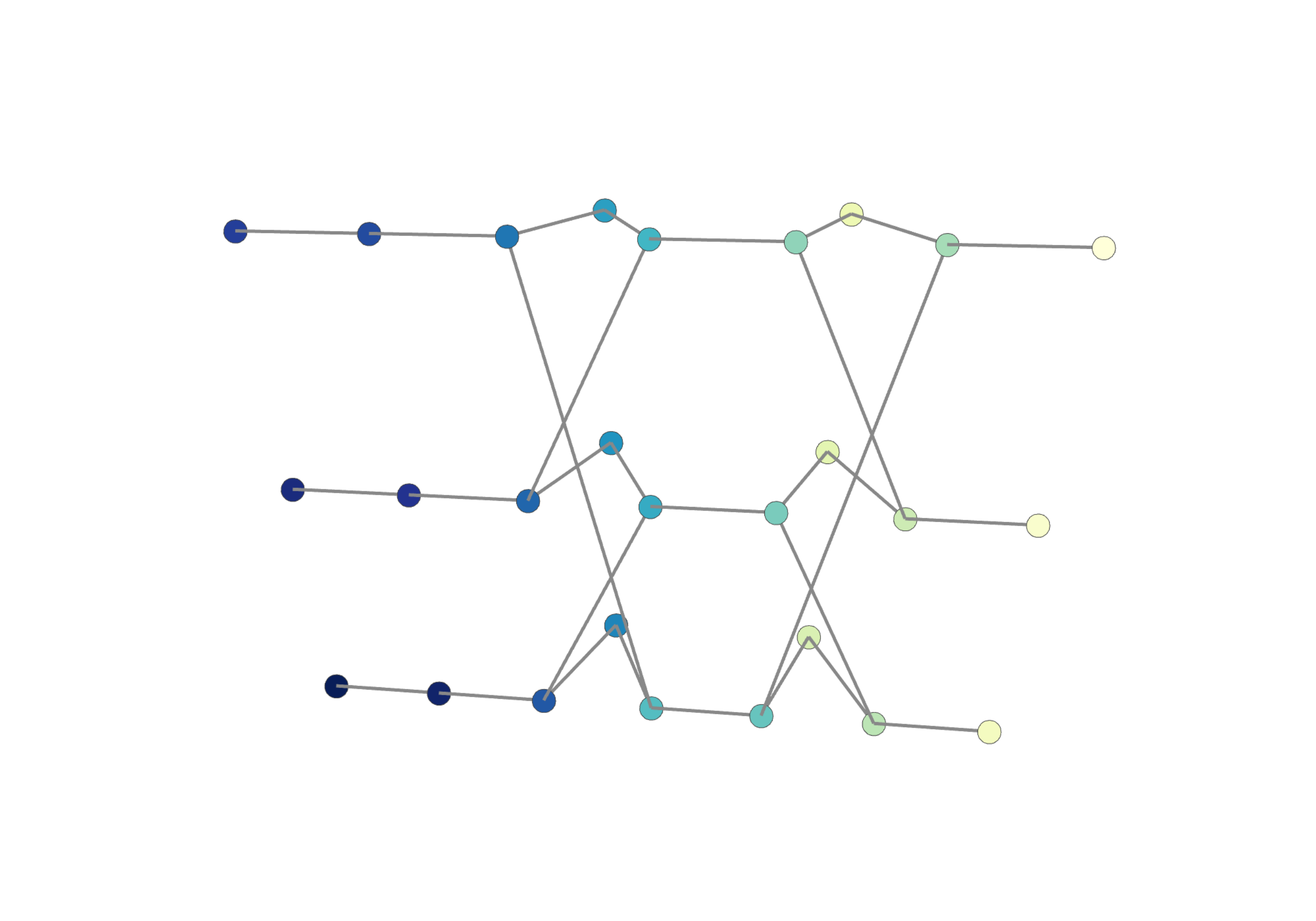}
        % \caption[Dual total correlation for three variables.]%
        % { Loop network on $2$ neurons}
        % \label{fig:2loop}
    \end{subfigure}
        \hfill
    \begin{subfigure} 
        \centering
        \includegraphics[width=0.22\textwidth]{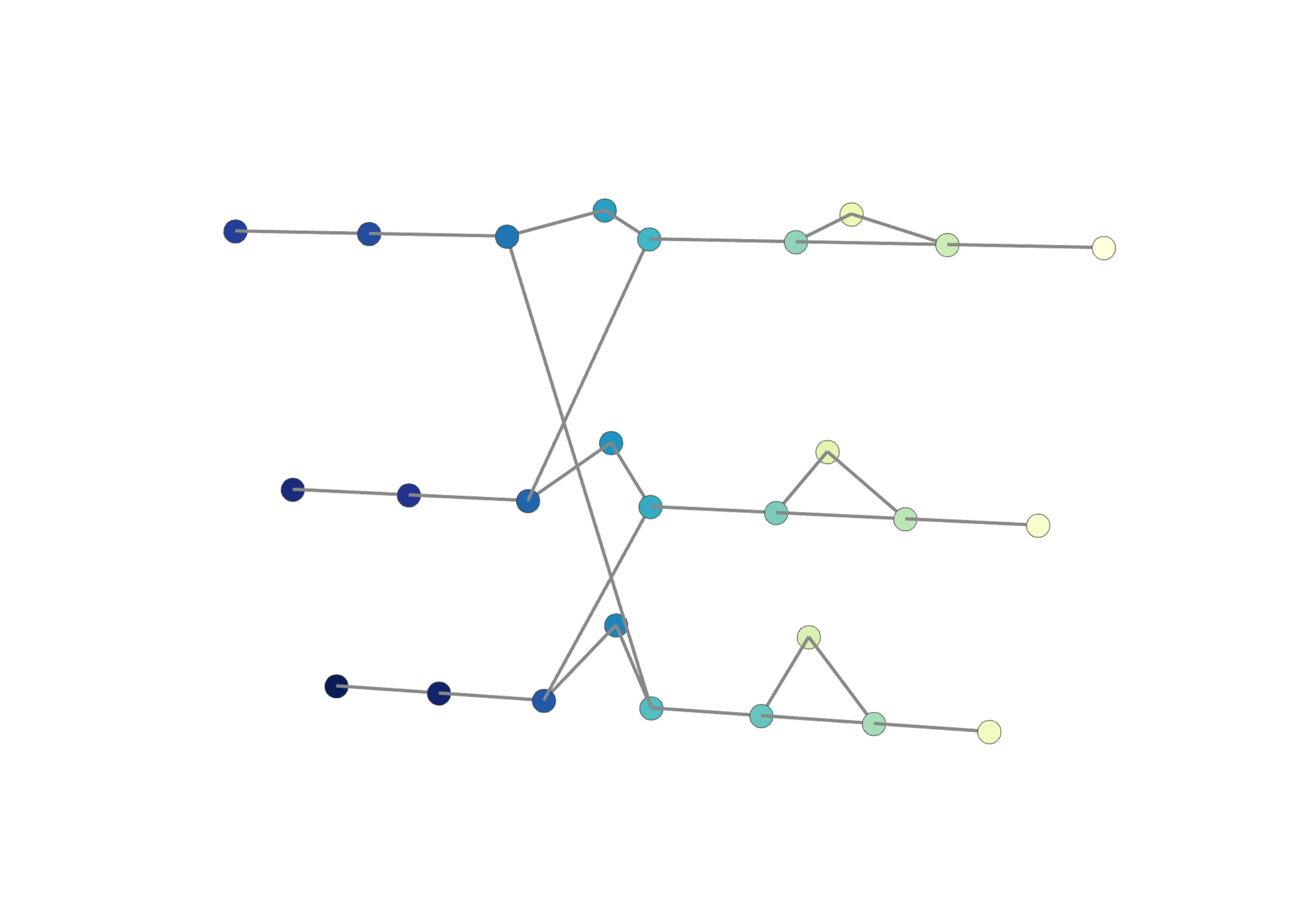}
        % \caption[Dual total correlation for three variables.]%
        % { Loop network on $2$ neurons}
        % \label{fig:2loop}
    \end{subfigure}
        \hfill
    \begin{subfigure} 
        \centering
        \includegraphics[width=0.22\textwidth]{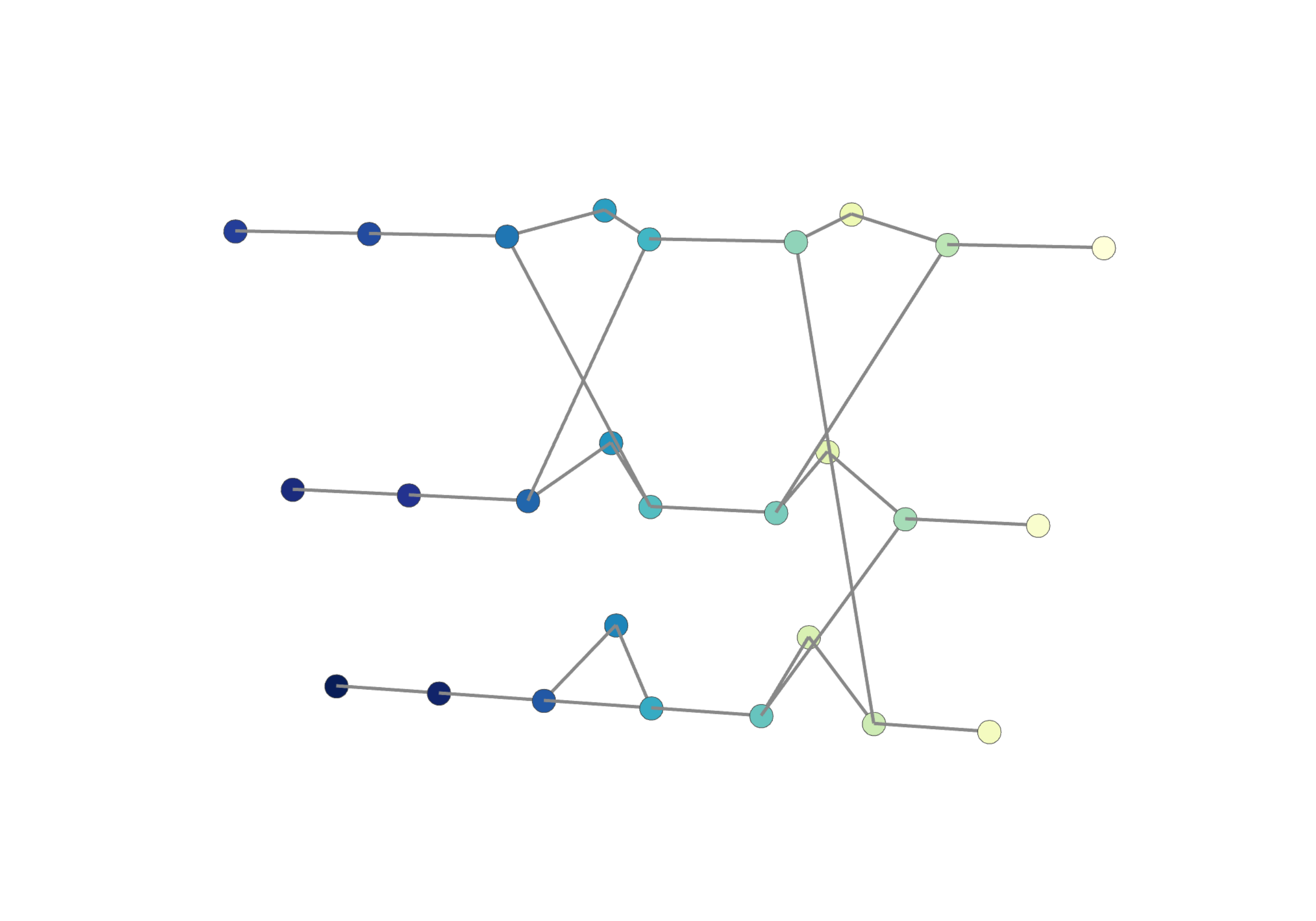}
        % \caption[Dual total correlation for three variables.]%
        % { Loop network on $2$ neurons}
        % \label{fig:2loop}
    \end{subfigure}
        \hfill
    \begin{subfigure} 
        \centering
        \includegraphics[width=0.22\textwidth]{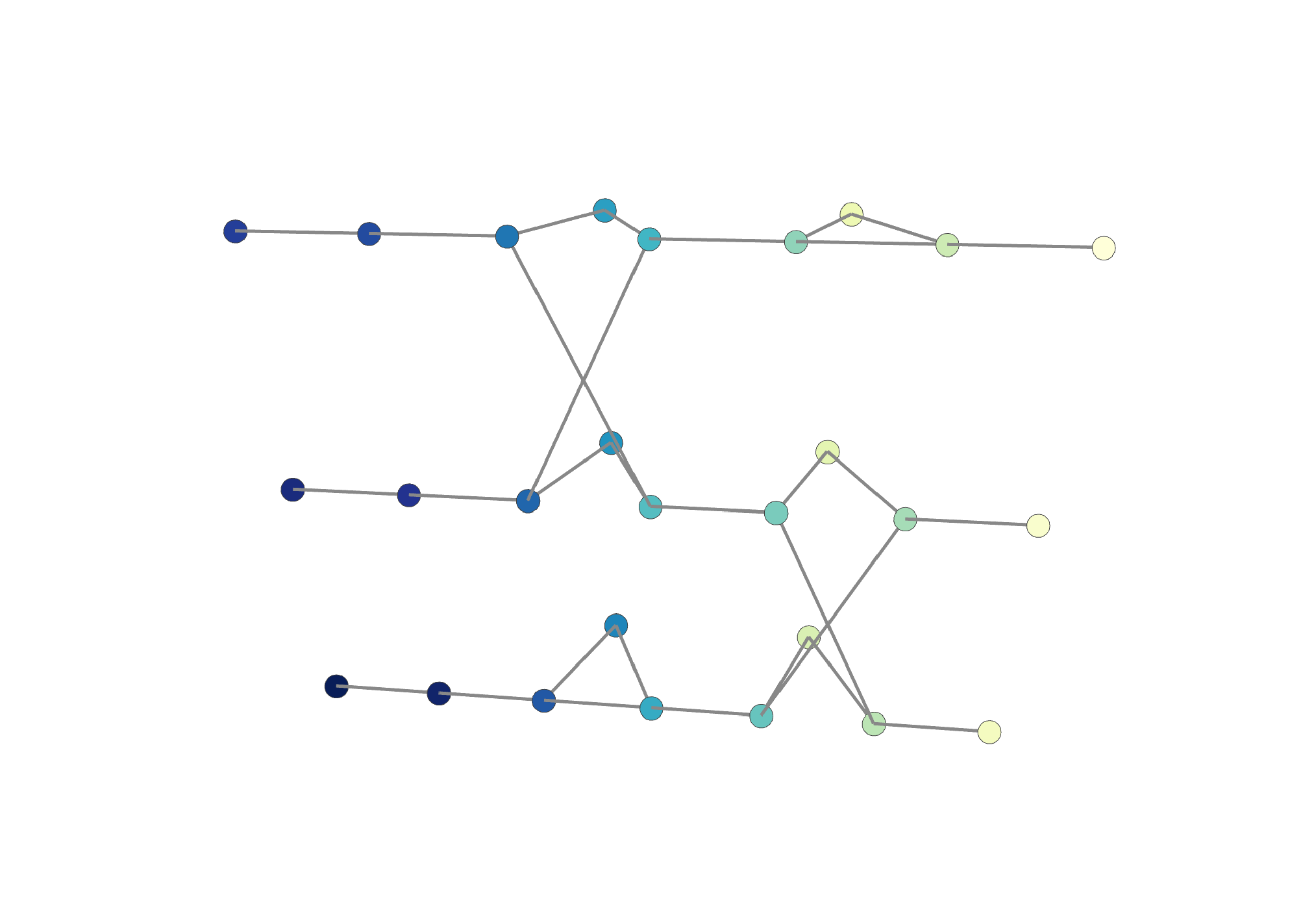}
        % \caption[Dual total correlation for three variables.]%
        % { Loop network on $2$ neurons}
        % \label{fig:2loop}
    \end{subfigure}
        \hfill
    \begin{subfigure} 
        \centering
        \includegraphics[width=0.22\textwidth]{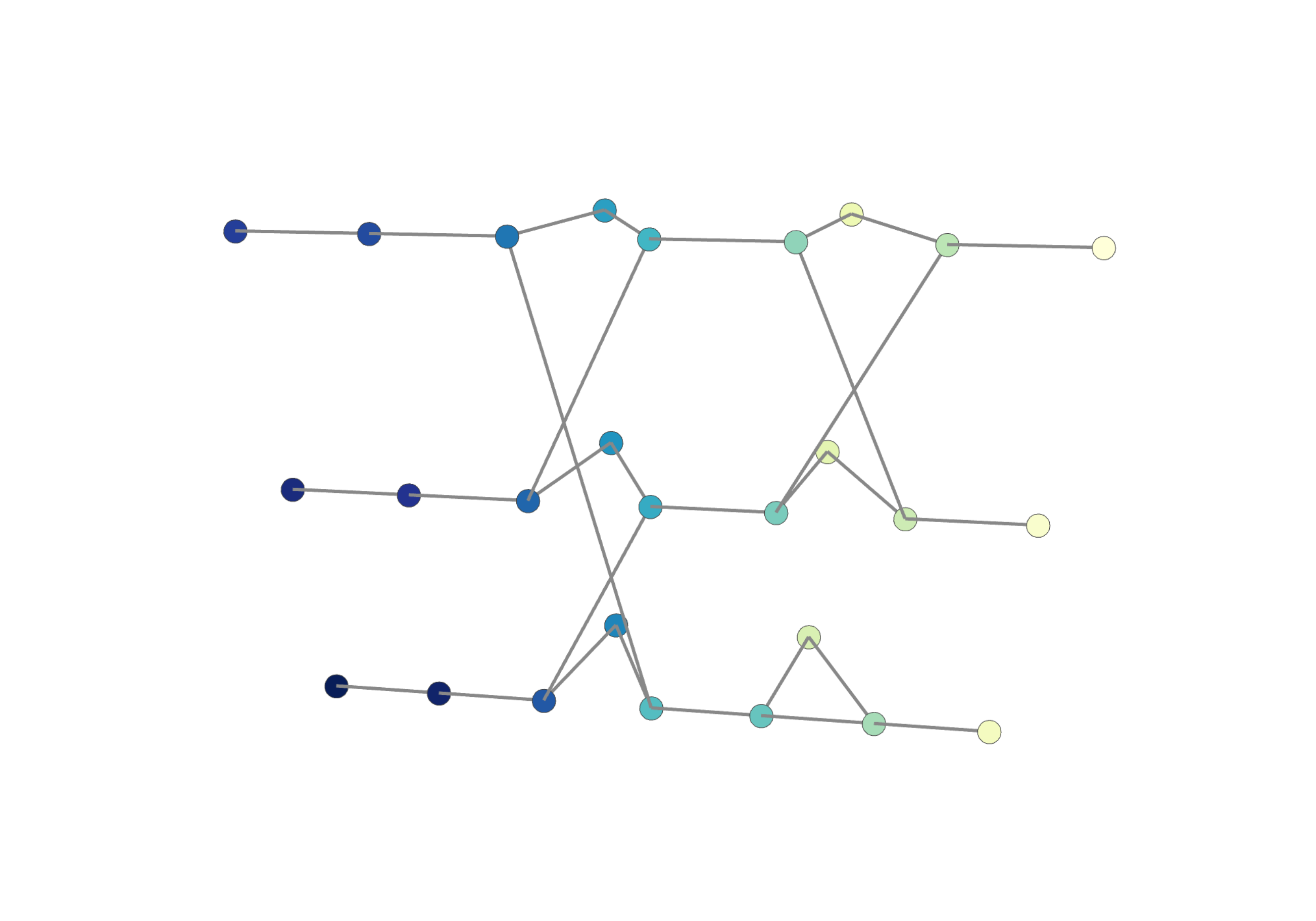}
        % \caption[Dual total correlation for three variables.]%
        % { Loop network on $2$ neurons}
        % \label{fig:2loop}
    \end{subfigure}
    \caption{All degree $3$ covers with base graph from Figure~\ref{fig:basegraph}}
    \label{fig:deg3covers}
\end{figure*}

\begin{figure*}[ht]
    \begin{subfigure}
        \centering
        \includegraphics[width=0.3\textwidth]{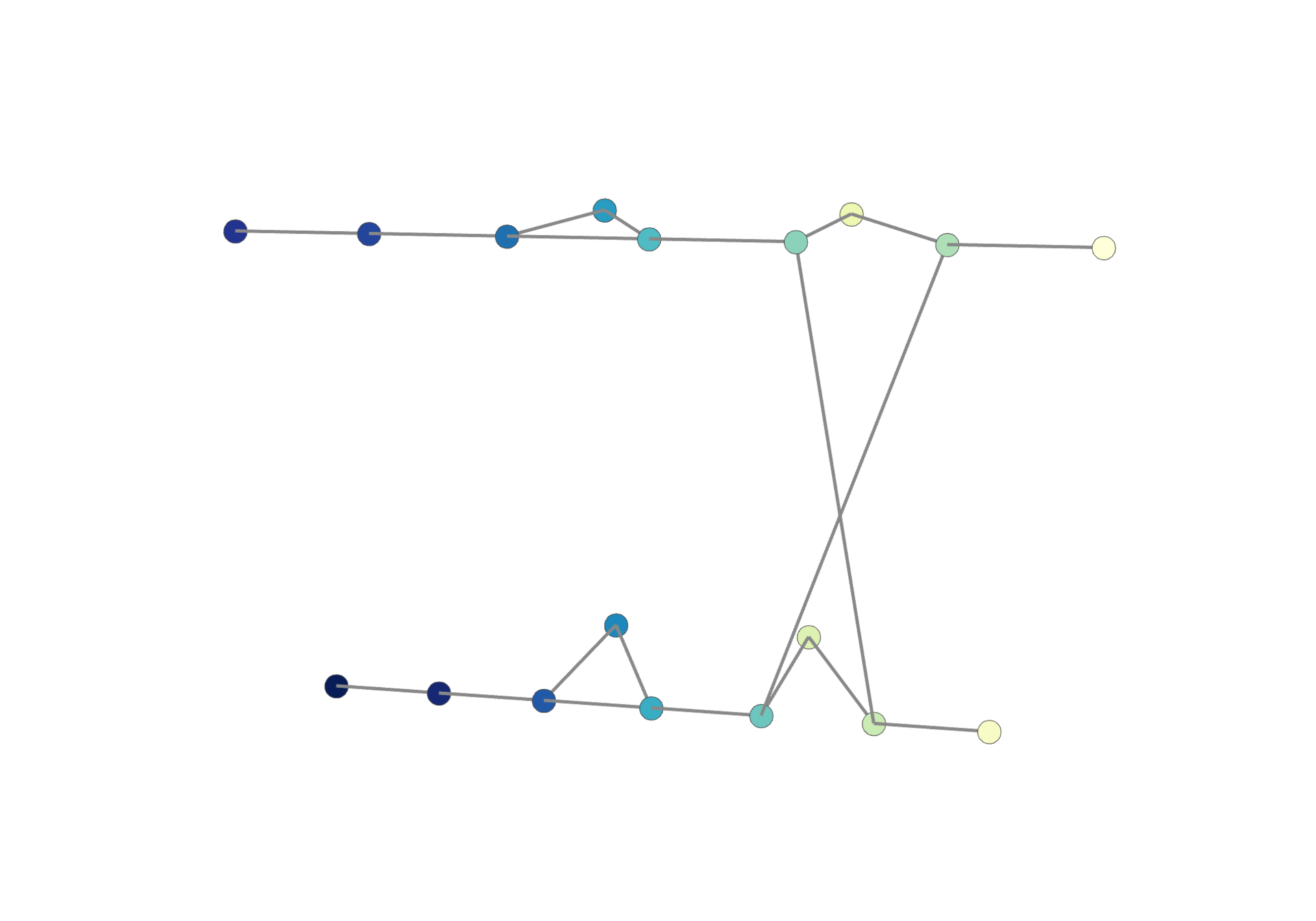}
        % \caption{Disconnected network on $2$ neurons}
        % \label{fig:2discon}
        \end{subfigure}
        \hfill
    \begin{subfigure}
        \centering
        \includegraphics[width=0.3\textwidth]{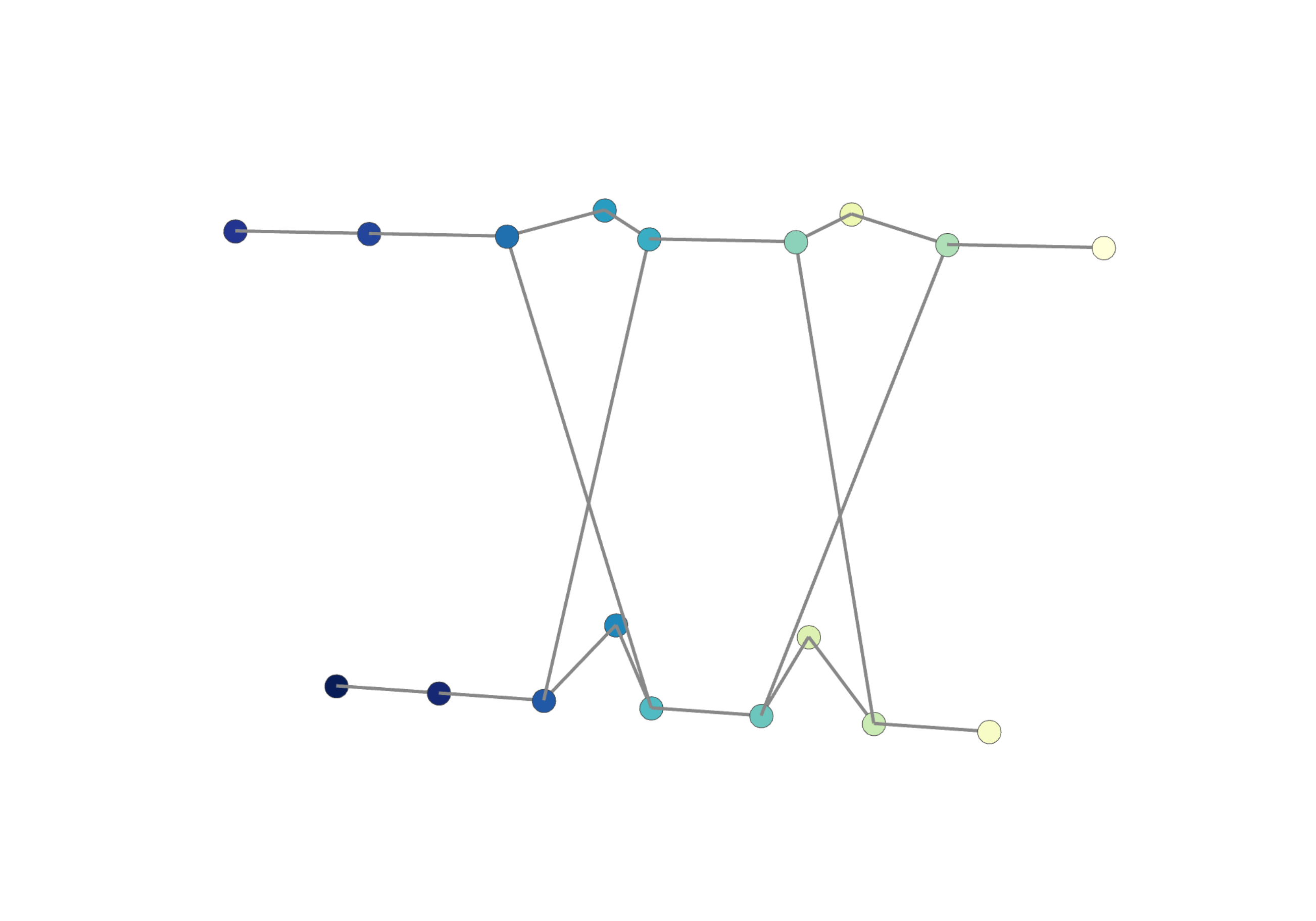}
        % \caption[$1$-simplex]%
        % {$1$-simplex network}
        % \label{fig:1simplex}
    \end{subfigure}
    \hfill
    \begin{subfigure} 
        \centering
        \includegraphics[width=0.3\textwidth]{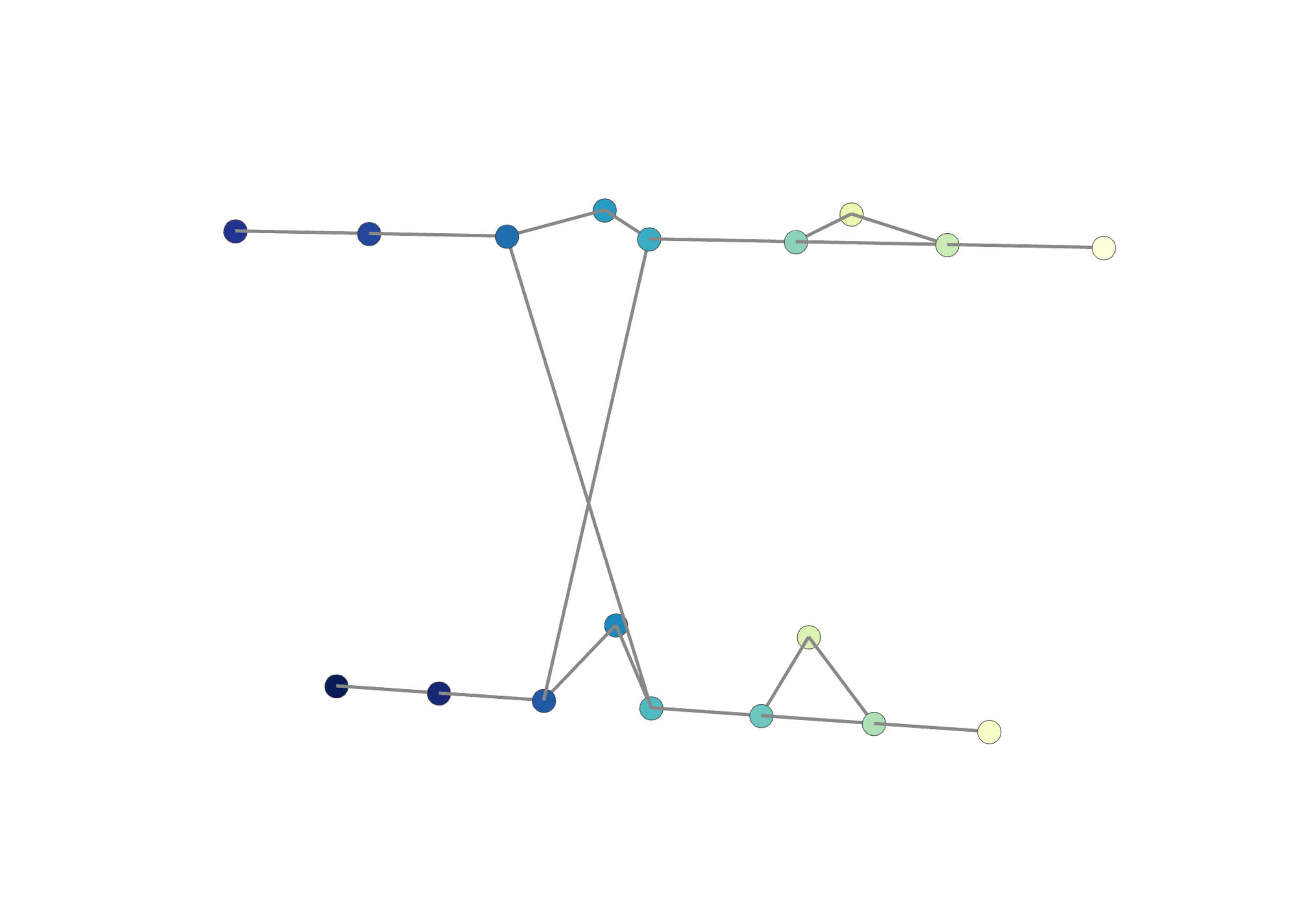}
        % \caption[Dual total correlation for three variables.]%
        % { Loop network on $2$ neurons}
        % \label{fig:2loop}
    \end{subfigure}
    \caption{All degree $2$ covers with base graph from Figure~\ref{fig:basegraph}}
    \label{fig:deg2covers}
\end{figure*}

\begin{figure}[ht]
    \centering
    \includegraphics[width=0.37\textwidth]{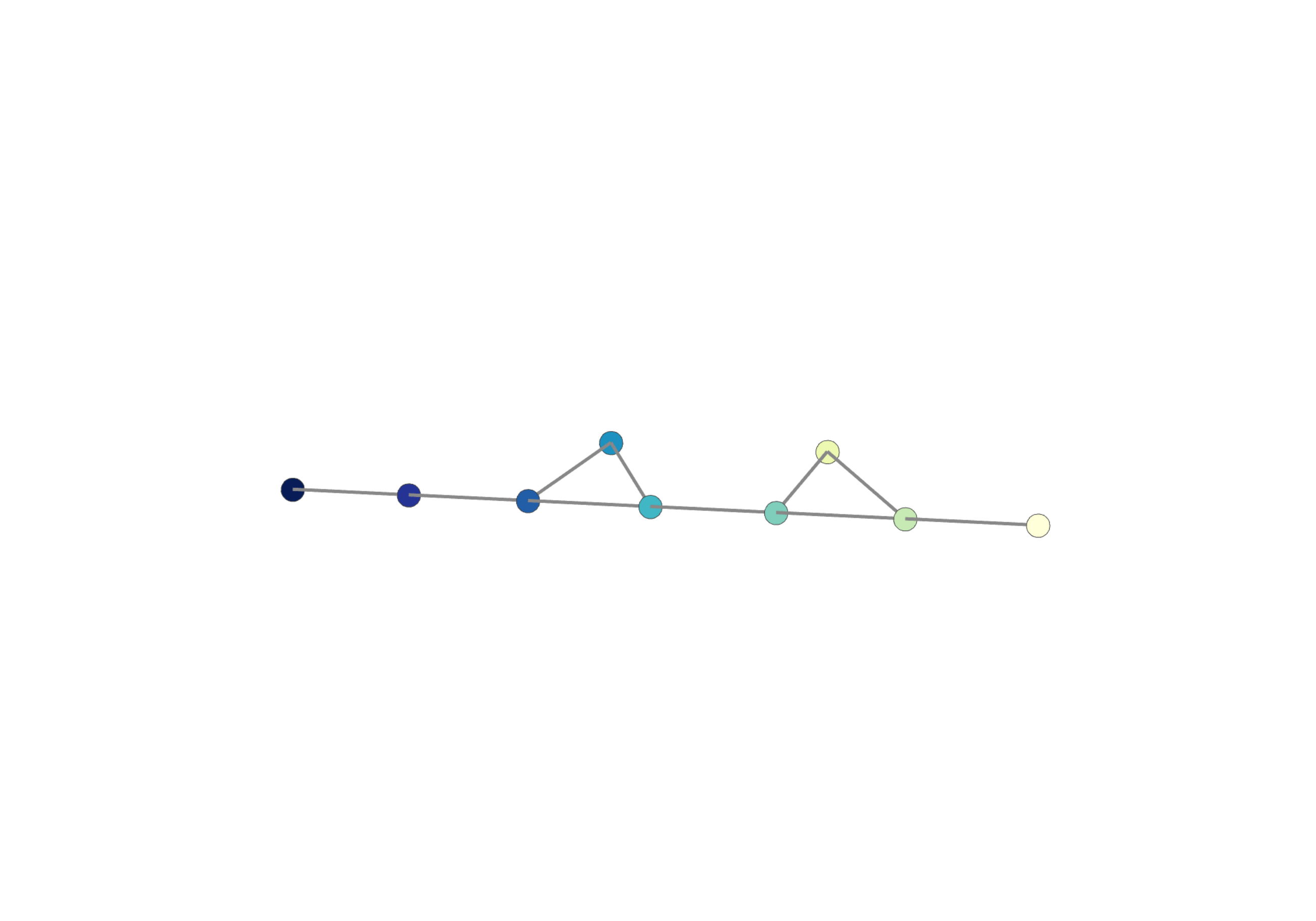}
    \caption{Base graph used in experiments}
    \label{fig:basegraph}
\end{figure}

\end{document}